\newtheorem{definition}{Definition}
\newtheorem{theorem}{Theorem}
\title{AAAI Press Formatting Instructions \\for Authors Using \LaTeX{} --- A Guide}
\author{
    %Authors
    % All authors must be in the same font size and format.
    Written by AAAI Press Staff\textsuperscript{\rm 1}\thanks{With help from the AAAI Publications Committee.}\\
    AAAI Style Contributions by Pater Patel Schneider,
    Sunil Issar,\\
    J. Scott Penberthy,
    George Ferguson,
    Hans Guesgen,
    Francisco Cruz\equalcontrib,
    Marc Pujol-Gonzalez\equalcontrib
}
\title{My Publication Title --- Single Author}
\author {
    Author Name
}
\title{Targeted Activation Penalties Help CNNs Ignore Spurious Signals}
\author {
    % Authors
    Dekai Zhang\textsuperscript{\rm 1},
    Matthew Williams\textsuperscript{\rm 2,\rm3},
    Francesca Toni\textsuperscript{\rm 1}
}
\begin{document}

\maketitle

\begin{abstract}
Neural networks (NNs) can learn to rely on spurious signals in the training data, 
leading to poor generalisation. Recent methods %train neural networks using
tackle this problem by %minimising input gradients 
training NNs with additional
ground-truth annotations of such signals.
These methods 
may, however, let spurious signals re-emerge in deep 
convolutional NNs (CNNs).
We propose \emph{\textbf{T}argeted \textbf{A}ctivation \textbf{P}enalty (TAP)}, a new method tackling the same problem %but achieving minimisation of input gradients 
by penalising activations to control the re-emergence of spurious signals in deep CNNs,  
while also lowering training times and memory usage. In addition,  ground-truth annotations can be expensive to obtain. 
We show that TAP still works well with annotations generated by pre-trained models as effective substitutes of ground-truth annotations.
We demonstrate the power of TAP against two state-of-the-art baselines on the MNIST benchmark and on two clinical image datasets,
using four different CNN architectures.
\end{abstract}
\section{Introduction}
\label{sec:intro}
Neural networks (NNs) have demonstrated strong performances and in some domains have exceeded experts \citep{rajpurkarDeepLearningChest2018, keCheXtransferPerformanceParameter2021}. These success stories come with an important caveat: NNs appear to be very good at exploiting spurious signals which boost their performance on the training data but lead to poor generalisation \citep{hendricksWomenAlsoSnowboard2018}. \citet{ribeiroWhyShouldTrust2016a}, for instance, find that a model can achieve near perfect results in distinguishing huskies from wolves by making use of the background in the images. More worryingly, some models trained to detect pneumonia in chest radiographs \citep{zechVariableGeneralizationPerformance2018} were found to make heavy use of image artifacts. 

Recent work on \emph{explanatory supervision (XS)} has shown that models can be successfully protected from learning spurious signals by eliciting ground-truth explanations from humans as an additional supervision for the models \citep{rossRightRightReasons2017, tesoExplanatoryInteractiveMachine2019, riegerInterpretationsAreUseful2020, schramowskiMakingDeepNeural2020, hagosImpactFeedbackType2022, friedrichTypologyExploringMitigation2023}.
Of these methods, 
\emph{``right for the right reasons'' (RRR)}  \citep{rossRightRightReasons2017} and 
\emph{``right for better reasons'' (RBR)} \citep{shaoRightBetterReasons2021} seem to perform particularly well \citep{friedrichTypologyExploringMitigation2023}. These convincing performances, however, have been obtained on datasets requiring relatively shallow models (i.e., VGG-16 \citep{simonyanVeryDeepConvolutional2015} and shallower). It appears that they may be less successful with deeper convolutional NNs (CNNs), as illustrated under ``RRR'' and ``RBR'' in Figure~\ref{fig:explanations}.
Also, these works assume that extensive ground-truth explanations can be obtained, typically from humans, which in practice can be costly, especially in expert domains such as healthcare. 

In this paper, we define a novel XS method for CNNs, which adds a \emph{\textbf{T}argeted \textbf{A}ctivation \textbf{P}enalty (TAP)} to spurious signals, mitigating against the re-emergence of spurious signals in deeper layers (illustrated under ``TAP'' in Figure~\ref{fig:explanations}). We show that TAP performs competitively and better than RRR and RBR, 
while requiring lower training times and memory usage.
We further show that TAP can still perform well when replacing ground-truth explanations with noisier annotations generated by a teacher model pre-trained on as little as 1\% of the target domain. Our contributions can be summarised as follows\footnote{Source code: \texttt{https://github.com/dkaizhang/TAP}}:
\begin{itemize}
    \item We introduce TAP to teach CNNs to
    ignore spurious signals. We formally relate TAP to RRR and RBR: whereas RRR and RBR directly target input gradients, TAP indirectly does so by minimising activations, avoiding expensive second-order derivatives. In our experiments, this results in circa 25\% of the training time and half the memory usage compared to RRR and RBR.
    \item We compare TAP to RRR and RBR in the standard setting of a human teacher who provides ground-truth annotations, and we demonstrate that TAP can still be effective with noisy but automatically generated explanations from a teacher model which is pre-trained on a small clean dataset.

\item Our findings are supported by experiments (i) on MNIST \citep{lecunMNISTDatabaseHandwritten1998} using  a simple two-layer CNN
as a standard benchmark and, to show the efficacy of TAP on higher-stakes real world datasets, (ii) on two clinical datasets for pneumonia \citep{kermanyIdentifyingMedicalDiagnoses2018} and osteoarthritis \citep{chenFullyAutomaticKnee2019} %to show the efficacy of TAP on higher-stakes real world datasets 
using three commonly used architectures: VGG-16, ResNet-18 \citep{heDeepResidualLearning2016} and DenseNet-121 \citep{huangDenselyConnectedConvolutional2017}.
\end{itemize}
This paper is an extended version (with Appendix) of the paper published with the same title at AAAI-2024.

\begin{figure*}[ht]
\centering
\includegraphics[width=0.9\textwidth]{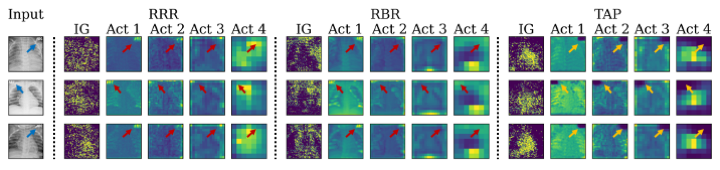} % Reduce the figure size so that it is slightly narrower than the column.
\caption{Three chest x-rays with spurious tags placed in the corners (blue arrows). Input gradients (IG) and activations at each of the four convolutional blocks (Act 1--4) of a ResNet-18 trained using RRR, RBR and TAP. With RRR and RBR the spurious %medical 
tags can re-emerge in deeper layers %of the network 
(red arrows), while TAP mitigates their presence throughout (yellow arrows).}
\label{fig:explanations}
\end{figure*}

\section{Related Work}
\label{sec:background}

\paragraph{Explainable AI (XAI).}
NNs are frequently seen as black boxes which can be difficult to audit \citep{adlerAuditingBlackboxModels2018}. 
In computer vision, saliency maps are often used to highlight relevant areas in a given input \citep{simonyanDeepConvolutionalNetworks2014,springenbergStrivingSimplicityAll2015,sundararajanAxiomaticAttributionDeep2017,arrietaExplainableArtificialIntelligence2019,selvarajuGradCAMVisualExplanations2020} and have been successfully deployed to identify models which use spurious signals in the data \citep{ribeiroWhyShouldTrust2016a, lapuschkinUnmaskingCleverHans2019}. While XAI offers tools of identification, it typically does not address how to correct models that use spurious signals.

\paragraph{Explanatory Supervision (XS).}
Recent work on XS has investigated if ground-truth annotations of spurious signals can be used as an additional source of supervision to train models to make predictions for the ``right reasons'' \citep{rossRightRightReasons2017,tesoExplanatoryInteractiveMachine2019,riegerInterpretationsAreUseful2020,schramowskiMakingDeepNeural2020,shaoRightBetterReasons2021}. The pioneering method, RRR, by \citet{rossRightRightReasons2017} and RBR, a later extension of RRR by \citet{shaoRightBetterReasons2021}, use input gradient regularisation, which have been shown to be an effective mechanism to prevent models from learning spurious relationships \citep{friedrichTypologyExploringMitigation2023}. These, however, rely on expensive second-order derivatives%. In this work, we present a novel XS method that does not make use of second-order derivatives
, whereas our proposed method does not. Furthermore, common to the above methods is the assumption of sample-wise ground-truth annotations, typically provided by a human. We find that, with our novel method, ground-truth annotations can in some cases be replaced with noisier annotations from a pre-trained teacher model.

\paragraph{Teacher-Student Settings.}
The setting we consider is closest to knowledge distillation \citep{hintonDistillingKnowledgeNeural2015} in which a student model receives %additional 
supervision from a teacher model. %In knowledge distillation, 
Specifically, the student is trained to match logit targets provided by the teacher. Later extensions also match activations \citep{zagoruykoPayingMoreAttention2017} or Jacobians \citep{srinivasKnowledgeTransferJacobian2018}. The setting we consider differs in that the teacher does not directly provide additional targets to be matched but instead identifies areas to be ignored. We discuss these methods further in Appendix~\ref{appx:kd_comparison}.

\section{Preliminaries}
\label{sec:pre}

\paragraph{Setting.} Assume %we are given 
a \emph{dataset} $\{(X_i, y_i)\}_{i=1}^N$ of $N$ labelled images. For %notational 
brevity, we assume that each image is single-channel, so that $X_i \in \mathcal{X} \subseteq \mathbb{R}^{H, W}$ consists of $H \times W$ pixels. Each label $y_i \in \mathcal{Y} \subseteq %\mathbb{R}
\{0,1\}^{K}$ is a one-hot vector over $K$ classes. Suppose we wish to learn, from this dataset, a CNN with $L$ convolutional layers, given by $f_{\theta}$,
parameterised by ${\theta}$,
such that for every \emph{input} $X \in \mathcal{X}$, 
$f_{\theta}(X) = \hat{y}$ with $\hat{y} %= (\hat{y}_1,\ldots,\hat{y}_n)^\top \in \mathbb{R}^{N,K}
\in \{0,1\}^K$ denoting the \emph{output vector}. 

Similar to \citet{adebayoPostHocExplanations2022}, we focus on a setting where the dataset 
is the result of a label-revealing \emph{contamination} of a \emph{clean} dataset $\{(X_i^*, y_i)\}_{i=1}^N$ where $X_i^* \in \mathcal{X}^* \subseteq \mathbb{R}^{H,W}$. The contaminated instance $X_i$ thus contains a signal which spuriously reveals $y_i$, whereas $X_i^*$ does not. 
Formally, we suppose there exists some \emph{spurious contamination function} $SC : \mathcal{X}^* \times \mathcal{Y} \rightarrow \mathcal{X}$ 
which induces spurious correlations in the clean dataset. Examples of spurious signals include medical tags, which are text strings frequently found on radiographs and which may spuriously correlate with the label \citep{adebayoPostHocExplanations2022}. The objective in this setting is to learn a classifier $f_{\theta}$ from the contaminated data that does not only classify well but which does so without relying on spurious signals. Intuitively, the classifier should produce the same output regardless of what spurious signals are added by the contamination function. 

\begin{definition}
\label{def:SC} Classifier $f_\theta$ is \emph{not reliant on spurious signals} $SC(X^*,y) \in \mathcal{X}$ for $X^* \in \mathcal{X}^*, y\in \mathcal{Y}$ if:
\begin{equation}
    f_{\theta}(SC(X^*,y)) = f_{\theta}(SC(X^*,permute(y)))
\label{eq:unbiased}
\end{equation}
where $permute(y)$ is some permutation of label $y$. 
\end{definition}

\begin{figure*}[ht]
\centering
\includegraphics[width=0.65\textwidth]{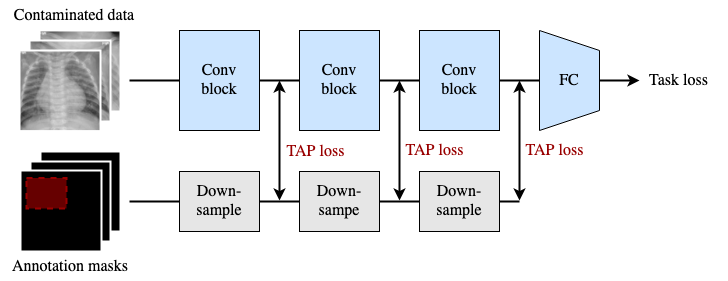} % Reduce the figure size so that it is slightly narrower than the column.
\caption{TAP losses target activations throughout the %network 
CNN with (input-level) annotation masks.}
\label{fig:tap}
\end{figure*}

\paragraph{Learning to Ignore Spurious Signals.}
If the model is making correct classifications  but is found to %be using 
use 
spurious signals, it is not ``right for the right reasons'' \citep{rossRightRightReasons2017}. To revise the model, the common approach in XS is to obtain feedback, typically from a human, in the form of an annotation mask $M \in \mathbb{R}^{H,W}$ for each image, which indicates spurious signals %in the image
therein.

To construct these masks, observe that each input $X\in \mathcal{X}$ %in the dataset 
can be decomposed into two disjoint matrices holding \emph{clean signals} $X^c$ and \emph{spurious signals} $X^s$, so that $X = X^c + X^s$. Note that $X^c$ therefore only contains the parts of the clean input $X^*$ unaffected by the transformation $SC$, so that, for $i=1,\ldots,H, %i
j=1,\ldots,W$, $X^c_{ij} = X^*_{ij}$ if and only if $X^c_{ij} \neq 0$% and else $X^c_{ij} = 0$
. We then define the mask as an indicator of spurious signals:
\begin{equation}
    M_{ij} = \mathds{1}(X^c_{ij} = 0)
\end{equation} 

Given mask $M$, a common approach in XS is to augment the loss function $\mathcal{L}$ to supervise both the task and reasons:
\begin{equation}
\begin{aligned}
    \mathcal{L}(X, y, M; \theta) = \mathcal{L}_{Task}(X, y; \theta) + \lambda \mathcal{L}_{XS}(X, M; \theta) 
    \label{eq:lambda}
\end{aligned}                        
\end{equation}
where $\mathcal{L}_{Task}$ corresponds to the task (e.g., classification) loss, $\mathcal{L}_{XS}$ is the XS loss %for explanatory supervision
penalising the use of wrong reasons as defined by %the matrix 
$M$, and $\lambda$ is a hyperparameter for the relative impact. Note the masks are not used to cover the input but define a penalty region, so that the model still ``sees" the entire input.

RRR is a pioneering instantiation of $\mathcal{L}_{XS}$, which targets spurious signals by penalising their input gradients \citep{rossRightRightReasons2017}:
\begin{equation}
\begin{aligned}
    \mathcal{L}^{RRR}_{XS}(X, M; \theta) = \sum_{i=1}^H \sum_{j=1}^W \left[ M \odot  \sum_{k=1}^K  \frac{\partial f_{\theta}(X)_{k}}{\partial X} \right]_{ij}^2
\end{aligned}                        
\label{eq:rrr}
\end{equation}
where $\odot$ denotes the Hadamard product. RBR, %a recent 
an extension of RRR, proposes to multiply influence functions with input gradients to take into account model changes from perturbing the input \citep{shaoRightBetterReasons2021}. Other instantiations exist \citep{riegerInterpretationsAreUseful2020,schramowskiMakingDeepNeural2020} but RRR and RBR appear to have the greatest corrective effect amongst XS methods \citep{friedrichTypologyExploringMitigation2023}. These methods, however, require higher-order derivatives which are expensive to compute. In this paper, we propose targeting activations of CNNs, which are computed as part of a single forward pass.

\section{Our Approach}
\label{sec:methods}
We introduce TAP, a new XS loss which targets activations of spurious signals with penalties (as illustrated in Figure~\ref{fig:tap}). We then frame the objective of teaching a model to ignore spurious signals as a teacher-student problem in which the teacher is a pre-trained model (see Figure~\ref{fig:framework}).

\subsection{TAP: Targeted Activation Penalty}
TAP takes advantage of the fact that convolutional layers preserve spatial relationships within an image in its activation map \citep{lecunDeepLearning2015}. Given this, we propose targeting parts of the activation map corresponding to spurious signals with penalties.

Suppose $Z^l \in \mathbb{R}^{C^l, H^l, W^l}$ is the tensor output of the $l$-th convolutional layer with $C^l$ output channels, filters $w^l$, bias $b^l$ and activation function $\sigma$, so that:
\begin{equation}
    Z^l = w^l * \sigma(Z^{l-1}) + b^l
\end{equation}
where the $*$-operator denotes the convolutional product. We define the \emph{activation map} $A^l$ %of this layer 
as the channel-wise sum: 
\begin{equation}
    A^l = \sum_{c=1}^{C^l} \sigma(Z^l_c)
\end{equation}

Given an image $X \in \mathbb{R}^{H, W}$, where $H \geq H^l$ and $W \geq W^l$, we may need to downscale the annotation mask $M \in \mathbb{R}^{H, W}$  to match the dimensions of the activation map. We define a \emph{downscaling function} $\mathcal{D}: \mathbb{R}^{H, W} \times \mathbb{N} \rightarrow \mathbb{R}^{H^l, W^l}$, which accepts the input-level annotation mask $M$ and an integer to identify the layer and outputs a downscaled annotation mask $M^l$ with the target dimensions. To effectively target the areas of the spurious signals in the activations, the downscaling function needs to preserve the spatial information in the annotation masks and account for the receptive field of a given activation element. 
We apply max-pooling with a stride of 1 using a kernel size of $\kappa$ to effectively increase the annotation region by $\lfloor \kappa/2 \rfloor$ pixels in each direction. We then apply average pooling to match the height and width dimensions of layer $l$:
\begin{equation}    
    \mathcal{D}(M, l) = \text{AvgPool}(\text{MaxPool}(M), H^l, W^l) 
\end{equation}

We found a choice of $\kappa=3$ to be a good balance between capturing the influence of targeted pixels and preventing over-regularisation from too large a penalty region.

\begin{definition}
    Given a downscaling function $\mathcal{D}$, annotation mask $M$ and an $L$-layer CNN parameterised by $\theta$, producing activation maps $A^l$ in layer $l$, the \emph{Targeted Activation Penalty (TAP)} is:
        \begin{equation}
        \mathcal{L}^{TAP}_{XS}(X,M;\theta) = \sum_{l=1}^L \dfrac{1}{L} \left\lVert \mathcal{D}(M,l) \odot A^l \right\rVert  
    \end{equation}
    where $\left\lVert \cdot \right\rVert$ denotes the L1-norm. 
\end{definition}

In practice, for deep CNNs, instead of targeting the output of every convolutional layer, we focus on a subset (as envisaged in Figure~\ref{fig:tap}).    

\begin{figure*}[ht]
\centering
\includegraphics[width=0.6\textwidth]{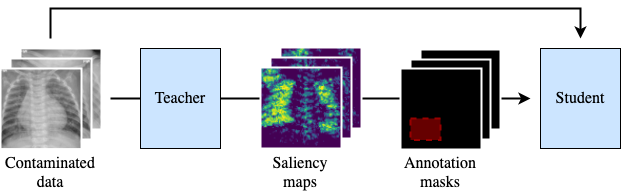} % Reduce the figure size so that it is slightly narrower than the column.
\caption{We use a pre-trained teacher model to derive saliency-based masks for %an artifacted input 
a contaminated dataset to determine the least important regions in each input, resulting in an input-level mask for each input. The %artifacted input 
contaminated dataset and the corresponding masks are used to train a student model which learns to ignore the regions identified by the masks.}
\label{fig:framework}
\end{figure*}

\paragraph{Relationship with RRR and RBR.}
Both RRR and RBR directly reduce the input gradient of spurious signals to zero. We show in Theorem~\ref{th:relationship} that for CNNs with ReLU activation function $\sigma$ we can target elements in the activation map, as we do with TAP, to reduce the input gradient of pixel $(i,j)$ for some convolutional layer $l>1$, which is given by:
\begin{equation}
\begin{aligned}
    \frac{\partial}{\partial X_{ij}} \sigma(Z^l) 
    = \sigma'(Z^l) \odot w^l * \frac{\partial \sigma(Z^{l-1})}{\partial X_{ij}} 
\end{aligned}
\label{eq:input_grad_layer}
\end{equation}

\begin{theorem}
\label{th:relationship}
    Suppose we have a CNN with convolutional layers $l =1,\ldots,L$, filters $w^l$ of kernel size $\kappa^l$ and ReLU activation $\sigma$. It is sufficient to optimise all activation map elements $A^l_{ab}$ with respect to $w^l$ to set the input gradient of pixel $(i,j)$ in layer $l$ (Equation~\ref{eq:input_grad_layer}) to zero, where $(a,b)$ are such that:
    \begin{equation}
    \begin{aligned}
        i - \kappa^1 - \ldots - \kappa^l + l \leq a \leq i  \\
        j - \kappa^1 - \ldots - \kappa^l + l \leq b \leq j 
    \end{aligned}
    \label{eq:receptive}
    \end{equation}
\end{theorem}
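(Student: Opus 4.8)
The plan is to prove the implication hidden in the word ``sufficient'': if every activation element $A^l_{ab}$ whose index $(a,b)$ lies in the window of Equation~\ref{eq:receptive} is driven to zero, then the whole tensor $\frac{\partial}{\partial X_{ij}}\sigma(Z^l)$ of Equation~\ref{eq:input_grad_layer} vanishes. First I would establish a \emph{backward receptive field} lemma by induction on $l$: the entry $\partial \sigma(Z^l_{c,a,b})/\partial X_{ij}$ can be nonzero only at spatial positions $(a,b)$ satisfying Equation~\ref{eq:receptive}. The base case $l=1$ is immediate from $Z^1 = w^1 * X + b^1$: the derivative $\partial Z^1_{c,a,b}/\partial X_{ij}$ is a single filter weight, nonzero only when $(i,j)$ lies in the $\kappa^1\times\kappa^1$ patch anchored at $(a,b)$, i.e.\ $i-\kappa^1+1 \le a \le i$ and $j-\kappa^1+1 \le b \le j$. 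For the inductive step I would substitute the layer-$(l-1)$ support into Equation~\ref{eq:input_grad_layer}: the convolution $w^l * \frac{\partial \sigma(Z^{l-1})}{\partial X_{ij}}$ evaluated at $(a,b)$ only reads layer-$(l-1)$ positions in the $\kappa^l\times\kappa^l$ patch anchored at $(a,b)$, so its spatial support is the Minkowski sum of that patch with the layer-$(l-1)$ window, which telescopes to exactly the bounds $i-\kappa^1-\cdots-\kappa^l+l \le a \le i$ (and likewise for $b$); the Hadamard factor $\sigma'(Z^l)$ can only shrink this support further.

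Next I would exploit the ReLU structure. Since $A^l_{ab} = \sum_{c=1}^{C^l}\sigma(Z^l_{c,a,b})$ is a sum of non-negative terms, $A^l_{ab}=0$ forces $\sigma(Z^l_{c,a,b})=0$ for every channel $c$, hence $Z^l_{c,a,b}\le 0$, hence $\sigma'(Z^l_{c,a,b})=0$ (using the standard convention $\sigma'(0)=0$). Combining this with Equation~\ref{eq:input_grad_layer}: at a position $(c,a,b)$ with $(a,b)$ inside the window of Equation~\ref{eq:receptive} the gradient entry is annihilated by the factor $\sigma'(Z^l_{c,a,b})=0$, while at a position with $(a,b)$ outside that window it is already zero by the receptive-field lemma. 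Hence $\frac{\partial}{\partial X_{ij}}\sigma(Z^l)$ is identically zero. Finally I would observe that, with the earlier layers $w^1,\dots,w^{l-1}$ held fixed, each $A^l_{ab}$ is an explicit function of $w^l$ through $Z^l = w^l * \sigma(Z^{l-1}) + b^l$, so minimising these elements --- precisely what the L1 penalty in $\mathcal{L}^{TAP}_{XS}$ does --- is attainable by optimising $w^l$, which is what the theorem's phrasing ``with respect to $w^l$'' refers to.

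I expect the main obstacle to be the index bookkeeping in the receptive-field induction: making the window endpoints telescope to $i-\kappa^1-\cdots-\kappa^l+l$ requires committing to a precise convolution and stride convention (the one-sided window of Equation~\ref{eq:receptive}, with upper endpoint exactly $i$, is consistent with unit-stride ``valid'' cross-correlation in which output position $(a,b)$ anchors the top-left corner of its input patch), and one must track that the convolution mixes channels yet does not enlarge the spatial support. A secondary subtlety is the non-differentiability of ReLU at $0$; I would handle it by working with the subgradient convention $\sigma'(0)=0$, consistent with how Equation~\ref{eq:input_grad_layer} is evaluated in practice, and note that the clean sufficient condition $\sigma'(Z^l_{c,a,b})=0$ on the relevant region holds as soon as the targeted activations reach (or approach) zero.
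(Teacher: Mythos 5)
Your proposal is correct and shares the paper's overall two-part skeleton --- (i) a receptive-field support lemma showing the gradient tensor can only be non-zero on the window of Equation~\ref{eq:receptive}, and (ii) an argument that on that window $\sigma'(Z^l_{cab})=0$, which annihilates Equation~\ref{eq:input_grad_layer} --- but the mechanism you use for step (ii) is genuinely different from the paper's. The paper interprets ``optimise $A^l_{ab}$ with respect to $w^l$'' as imposing the first-order condition $\partial A^l_{ab}/\partial w^l_c=0$, observes that this derivative factors as $\sigma'(Z^l_{cab})$ times the (constant, assumed non-zero) matrix of previous-layer activations, and concludes stationarity forces $\sigma'(Z^l_{cab})=0$. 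You instead argue from the attained minimum value: $A^l_{ab}=\sum_c\sigma(Z^l_{cab})$ is a sum of non-negative terms, so driving it to zero kills every channel, forces $Z^l_{cab}\le 0$, and hence $\sigma'(Z^l_{cab})=0$ under the subgradient convention. Your route is arguably closer to what the L1 penalty actually does during training (it minimises the value, not merely seeks a stationary point of the activation alone), and it makes the ReLU-at-zero convention explicit, which the paper leaves implicit; the paper's route buys the slightly weaker hypothesis that a first-order optimum w.r.t.\ $w^l$ suffices, without requiring the minimum value zero to be attained. Both arguments share the same unstated edge case --- if every previous-layer activation in the $(a,b)$ patch is dead and the bias is positive, the paper's constant matrix vanishes and your value zero is unattainable --- but in that case $\partial\sigma(Z^{l-1})/\partial X_{ij}$ already vanishes on the patch, so the conclusion holds anyway. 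Your inductive organisation of the receptive-field lemma is also sound and is just a repackaging of the paper's fully unrolled nested-sum computation; the index bookkeeping (base case $i-\kappa^1+1\le a\le i$, Minkowski-sum telescoping to $i-\kappa^1-\cdots-\kappa^l+l\le a\le i$) checks out.
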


We provide a sketch of the proof here and defer details to Appendix~\ref{appx:proof}. Note that activation elements $(a,b)$ outside the area defined by Equation~\ref{eq:receptive} do not capture pixel $(i,j)$ within their receptive fields, so that the input gradient of $(i,j)$ is zero for such elements. To ensure that it is zero for elements $(a,b)$ inside the area, we optimise activation $A^l_{ab}$ with respect to filters $w^l_c$, $c = 1,\ldots,C^l$.  The first-order condition sets the following to zero:
\begin{equation}
\begin{aligned}
    \frac{\partial A^l_{ab}}{\partial w^{l}_c} &= \sigma'(Z^l_{cab})
    \begin{bmatrix}
        \delta^{l-1}_{a,b} &\!\dots\!& \delta^{l-1}_{a+\kappa^l-1,b} \\
        \vdots &\!\ddots\!& \vdots \\
        \delta^{l-1}_{a,b+\kappa^l-1} &\!\dots\!& \delta^{l-1}_{a+\kappa^l-1,b+\kappa^l-1}     
    \end{bmatrix}
\end{aligned}
\end{equation}
where $\delta^{l-1} = \sigma(Z^{l-1})$ which is constant with respect to $w^l$. Given ReLU activation, the optimisation results in $\sigma'(Z^l_{ab})$ being pushed to zero. %\hfill $\Box$
\\

Figure~\ref{fig:explanations} illustrates that TAP indeed mitigates the presence of spurious signals in the input gradients even when only a subset of the corresponding activations are targeted. We provide further quantitative evidence in Appendix~\ref{appx:quant_comparison}, showing that, with TAP, less than 1\% of input gradients in the top-quartile by magnitude overlap with spurious signals. Notably, TAP does not require computing second-order derivatives to do so, which results in faster training times and lower memory consumption compared to RRR and RBR (Table~\ref{tab:train_speed}).

\begin{table}[h]
\centering
% \adjustbox{max width=0.99\columnwidth}{
\begin{tabular}{@{}lcccccc@{}}
% \begin{tabular}{lcccccccc}
\toprule
  & \multicolumn{2}{c}{VGG-16} & \multicolumn{2}{c}{ResNet-18} & \multicolumn{2}{c}{Dense-121} \\ \cmidrule(lr){2-3}\cmidrule(lr){4-5}\cmidrule(lr){6-7} 
                      Loss & it/s       & mem         & it/s     & mem     & it/s      & mem     \\ \midrule
No XS                   & 12.1     & 4.4     & 38.5      & 2.5     & 11.6       & 4.4         \\ 
RRR                    & 2.6      & 9.7     & 12.5      & 3.2     & 3.0        & 9.2         \\ 
RBR                    & 1.7      & 8.6     & 7.3       & 4.0     & \multicolumn{2}{c}{OOM}  \\ 
TAP                    & \textbf{11.8}     & \textbf{4.4}     & \textbf{36.7}      & \textbf{2.5}     & \textbf{12.0}       & \textbf{4.5}         \\ \bottomrule 
\end{tabular}
% }
\caption{Iterations per second (it/s) and memory usage in GB (mem) for batches of 16 $224\!\times\!224$-pixel images. DenseNet-121 with RBR runs out of memory (OOM). TAP adds less training overhead. Best result amongst XS methods in bold.}
\label{tab:train_speed}
\end{table}

\subsection{A Teacher-Student Framework}
Recognising spurious signals necessitates external input, as by definition they are informative in the training distribution. 
In previous works, the external input takes shape in the form of sample-wise ground-truth annotation masks $\{M_i\}_{i=1}^N$ \citep{rossRightRightReasons2017,riegerInterpretationsAreUseful2020,schramowskiMakingDeepNeural2020,shaoRightBetterReasons2021}, which can be costly when obtained from humans. To address this issue, we propose framing the objective of teaching a model to ignore spurious signals as a teacher-student problem. We consider the teacher as a model parameterised by $\theta_T$ which transfers its knowledge to a student model parameterised by $\theta_S$ to be trained on contaminated data, so that:
\begin{equation}
    \theta_S = \arg \min_\theta \sum_{i=1}^{N} \mathcal{L}(X_i,y_i,\theta_T;\theta)
\end{equation}

We instantiate this framework first with the standard setting in which the teacher model is a human who can provide ground-truth annotations and then with a pre-trained model which may provide noisier annotations. We argue that obtaining a small clean dataset $\{(X^*_i, y_i)\}_{i=1}^{N^*}$, with $N^*\!\ll\!N$, for pre-training a teacher may be easier than annotating all of the contaminated data. We propose using saliency maps to identify areas in the contaminated data the teacher model believes to be unimportant, which, if the teacher fulfils Equation~\ref{eq:unbiased}, should include the spurious signals. To find these areas, we make use of an explanation function $\mathcal{E}$ which accepts an NN and an input %instance 
to produce a saliency map over the input: 
\begin{equation}
\begin{aligned}
    \mathcal{E}(X, y; \theta_T) = E 
\end{aligned}                        
\end{equation}
where $E \in \mathbb{R}^{H,W}$. In our experiments, we use input gradients as the explanation method, but the choice is not restricted to a particular saliency method. 
For notational convenience, we assume saliency maps $E$ contain absolute values normalised to $[0,1]$.
% \begin{definition}
We propose using an element-wise threshold function $\Psi$ to construct the \emph{teacher annotation} $\Tilde{M}$:
\begin{equation}
\begin{aligned}
    \Tilde{M}_{ij}=(\Psi \circ E)_{ij} = \begin{cases} 1 & \text{if $E_{ij} < \tau$} \\
                            0 & \text{otherwise}
    \end{cases}
\end{aligned}                        
\end{equation}
for $i = 1,\ldots, H, j = 1,\ldots, W$. 
Intuitively, low saliency areas will be targeted by the teacher annotation. We can now re-state the optimal student parameters as:
\begin{equation}
    \theta_S = \arg \min_\theta \sum_{i=0}^{N}  \mathcal{L}(X_i,y_i,\Tilde{M_i};\theta)
\end{equation}

\section{Experimental Design}
\label{sec:experiments}
We first compare TAP against RRR and RBR with ground-truth annotations on a benchmark and two clinical datasets using four different CNN architectures. We then study %the performance of TAP 
performances under teacher annotations.

\subsection{Datasets}
We conduct experiments on MNIST, a standard benchmark in XS, and two clinical datasets: chest radiographs for detecting pneumonia (PNEU) and knee radiographs for grading osteoarthritis (KNEE). MNIST contains 60,000 $28\!\times\!28$ pixel images of handwritten digits. PNEU contains 5,232 paediatric chest radiographs, with approximately two-thirds presenting pneumonia. KNEE contains 8,260 radiographs of knee joints with 5 different osteoarthritis grades on an ordinal scale. The images in %the latter two datasets 
PNEU and KNEE vary in resolution and dimensions. We centre-crop and resize to $224\!\times\!224$ pixels. We use the pre-defined training and test splits for all datasets and reserve $10\%$ of the training split for validation.

To contaminate the datasets, we follow the common approach in XS and add spurious signals to the data which correlate with the label. 
Figure~\ref{fig:data} shows samples of the contaminations.
For MNIST, we follow \citet{rossRightRightReasons2017} and define $SC(X^*,y)$ (Definition~\ref{def:SC}) as adding $4\!\times\!4$ pixel patches to a random corner in $X^*$ with a pixel value set to $255 - label 
 \times 25$.
For $SC(X^*,permute(y))$ (the \emph{permuted} data), we choose random permutations of the labels, which essentially randomises the patch assignment.
For the medical datasets we add text strings to simulate medical tags. For PNEU, $SC(X^*,y)$ adds ``ABC'' for normal and ``XYZ'' for pneumonic cases. Given the binary prediction task, we choose to swap instead of randomising labels to construct $SC(X^*,permute(y))$. For KNEE, $SC(X^*,y)$ adds ``ABC'' (grade 0), ``DEF'' (grade 1), ``GHI'' (grade 2), ``JKL'' (grade 3) and ``MNO'' (grade 5), while $SC(X^*,permute(y))$ randomises the string assignment. We consider experiments with an additional contamination of the medical datasets in Appendix~\ref{appx:stripe}.

\begin{figure}[t]
\centering
\includegraphics[width=0.75\columnwidth]{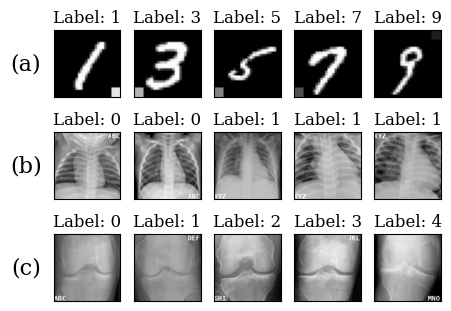} % Reduce the figure size so that it is slightly narrower than the column.
\caption{Overview of contaminated data. Brightness of corners correlate with label for (a) MNIST. Strings in corners correlate with label for (b) PNEU and (c) KNEE.}
\label{fig:data}
\end{figure}

For the teacher-student experiments, we hold out 10\% or 1\% of the training split as a clean dataset to pre-train a teacher before adding artifacts to induce spurious correlations in the remainder, as described above. 

\subsection{Evaluation Metrics}
If a model exploits spurious signals, it should perform worse when the correlations are removed or changed, so that Equation~\ref{eq:unbiased} does not hold. To evaluate the models, we (i) assess their performance on the contaminated data $SC(X^*,y)_{test}$. We measure accuracy for MNIST, F-score for PNEU given class imbalance, and mean absolute error (MAE) for KNEE given ordinal scores. We then (ii) assess the sensitivity of that initial performance to spurious signals by measuring the change in the respective metrics ($\Delta$-metric) when evaluated on $SC(X^*,permute(y))_{test}$. We report averaged results from 5 different seeds.

\subsection{Models}
We use a two-layer CNN (for architectural details, see Appendix~\ref{appx:architecture}) for MNIST and three commonly used CNN architectures for the medical datasets: VGG-16, ResNet-18 and DenseNet-121. 
For the teacher-student experiments, we use ResNet-18 as teacher for the medical datasets and the two-layer CNN for MNIST.

We apply TAP to both layers of the two-layer CNN. For ResNet-18, we found the output of each of the four residual blocks to be a natural choice. For DenseNet-121, we analogously chose the output of each of the four dense blocks. For VGG-16, we choose the output after each of the MaxPool layers. In each case, the targeted layers are spaced roughly evenly across the depth of the architectures.

We choose cross-entropy for the task loss $\mathcal{L}_{Task}$. We use SGD as optimiser with weight decay of 0.9. We train for 50 epochs. We use random initialisation for the two-layer CNN and use a learning rate of $10^{-3}$. For VGG-16, ResNet-18 and DenseNet-121 we initialise with ImageNet-weights and use a learning rate of $10^{-5}$. We use a batch size of 256 for MNIST and 16 for the medical datasets (8 for DenseNet-121 with RBR, given memory constraints). We tune $\lambda$ (Equation~\ref{eq:lambda}) by training on $SC(X^*,y)_{train}$ %\todo{what is this?} 
and evaluating the validation loss on  $SC(X^*,permute(y))_{val}$. We increase $\lambda$ in log steps from $10^{-9}$ to 1 for TAP and RRR and 1 to $10^9$ for RBR (given much smaller losses)---%all chosen values are stated in 
see Appendix~\ref{appx:hyperparameters}. 

Experiments were implemented 
with PyTorch 1.13 and run on a Linux Ubuntu 18.04 machine with an Nvidia RTX 3080 GPU with 10GB VRAM.

\section{Results}
We show that TAP-trained models perform well on contaminated data and do not rely on spurious signals on %three progressively more difficult 
the three chosen datasets.
Below we report experiments on the common XS setting with ground-truth annotations. Throughout, we use as baselines ``Base'' models trained on clean data and ``No XS'' models trained on contaminated data without XS. We then consider the second setting with teacher annotations. 
In the Appendix, we expand on our experiments, e.g., (i) we analyse input gradients and activations quantitatively (App~\ref{appx:quant_comparison}) and visually (App~\ref{appx:visual_comparison}), finding that TAP reduces the presence of spurious signals in deeper layers and, indirectly, in input gradients; (ii) we report results on another contamination of the medical datasets (App~\ref{appx:stripe}), further supporting our findings below; (iii) we report tabular values with standard deviations and (iv) include additional results on clean datasets (Apps~\ref{appx:gt_table}, \ref{appx:teacher_table}); (v) we provide further evidence of TAP's efficacy on additional CNNs (App~\ref{appx:additional_cnns}); (vi) we report an extended comparison of XS methods with teacher annotations (App~\ref{appx:rrr_rbr_teacher}); and, lastly, (vii) we discuss and compare with knowledge distillation methods (App~\ref{appx:kd_comparison}). 

\subsection{Setting 1: Ground-Truth Annotations}
\paragraph{MNIST.}
TAP performs competitively with existing methods. The left panel of Figure~\ref{fig:mnist} compares the test performance on $SC(X^*,y)_{test}$ of different XS losses (RRR, RBR, TAP) against Base and No XS models. At first glance, the different losses result in very similar performances. The right panel shows the drop in accuracy when evaluating on $SC(X^*,permute(y))_{test}$. Models that rely on spurious signals should be sensitive to changes in the spurious correlations. As expected, the accuracy of the No XS model drops significantly, whereas the Base model and the models trained with XS losses are insensitive to the spurious signals. This demonstrates that TAP performs as well as RRR and RBR.

\begin{figure}[h]
\centering
\includegraphics[width=0.99\columnwidth]{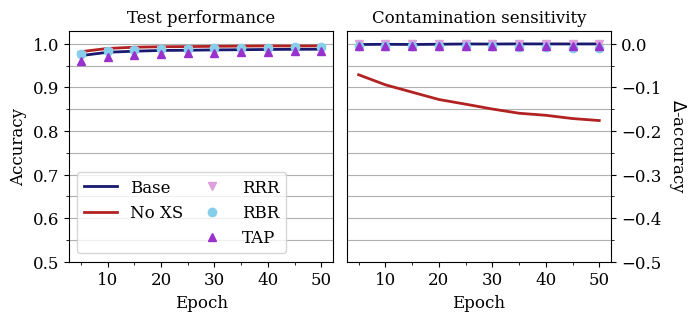} % Reduce the figure size so that it is slightly narrower than the column.
\caption{Test performance (accuracy) and contamination sensitivity ($\Delta$-accuracy) for MNIST with ground-truth annotations. Higher is better.}
\label{fig:mnist}
\end{figure}

\paragraph{PNEU.}
In Figure~\ref{fig:pneu}, the left panel initially suggest that the No XS model is best. The right panel, however, reveals that this is largely due to spurious signals, as the model's performance plummets when these change. The success of RRR and RBR vary with the model: RBR results in ostensibly strong performance for VGG-16, which turns out to rely on spurious signals, while RRR does so for DenseNet-121. 

TAP, on the other hand, closely matches the performance of the Base model and is insensitive to the change in spurious correlations for all three CNN architectures, highlighting its efficacy for deeper models, too.  

\begin{figure}[h]
\centering
\includegraphics[width=0.99\columnwidth]{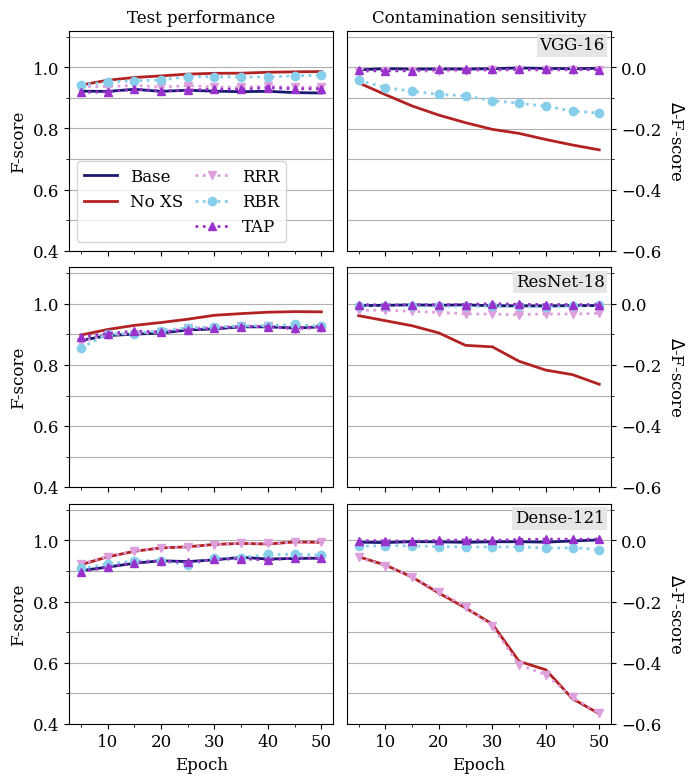} % Reduce the figure size so that it is slightly narrower than the column.
\caption{Test performance (F-score) and contamination sensitivity
($\Delta$-F-score) for PNEU with ground-truth annotations. Higher is better.}
\label{fig:pneu}
\end{figure}

\paragraph{KNEE.}

\begin{figure}[h]
\centering
\includegraphics[width=0.99\columnwidth]{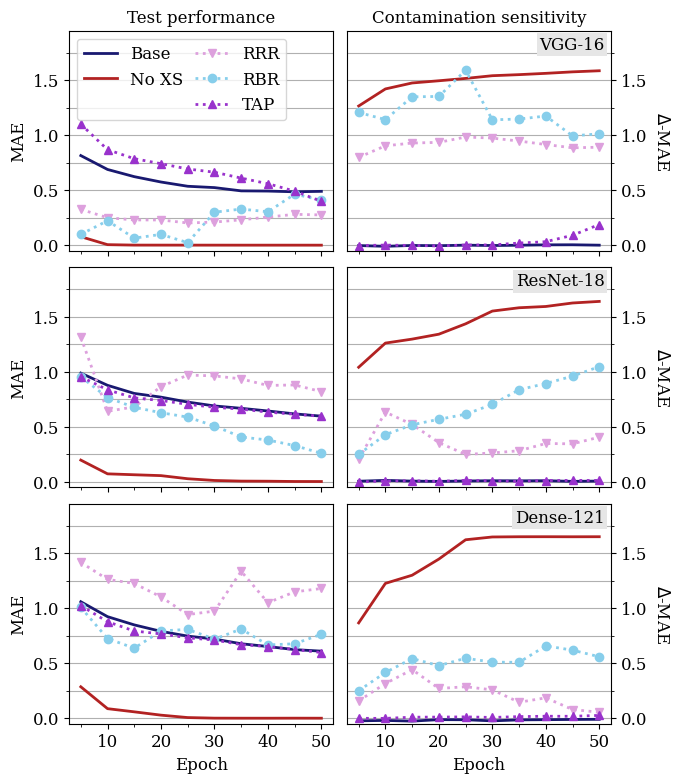} % Reduce the figure size so that it is slightly narrower than the column.
\caption{Test performance (MAE) and contamination sensitivity ($\Delta$-MAE) for KNEE with ground-truth annotations. Lower is better.}
\label{fig:knee}
\end{figure}

Figure~\ref{fig:knee} shows that while RRR and RBR seem to perform well (left panel) compared to TAP and even Base models, they exhibit significant contamination sensitivity (right panel) and thus only perform \emph{spuriously} well. Similarly, the No XS model at first appears to be best-performing but in fact relies on spurious signals. TAP, in contrast, shows little contamination sensitivity and matches the Base model. These results demonstrate the ability of TAP to protect deeper CNNs from learning spurious signals.

\subsection{Setting 2: Teacher Annotations}
Instead of ground-truth annotations, we now assume access to a small clean dataset (10\% or 1\% of the contaminated data in size) on which we pre-train a teacher to obtain annotations from. We focus on TAP and leave, for completeness, results for RRR and RBR to Appendix~\ref{appx:rrr_rbr_teacher}, which do not appear to work as well with teacher annotations compared to TAP. We also leave results on MNIST to Appendix~\ref{appx:teacher_mnist}.

Figure~\ref{fig:overlap} compares the teacher annotations to the ground-truth at different %threshold levels, $\tau$. SPACE CUT
thresholds $\tau$.
The annotations are better than random (i.e., recall and precision of circa 1.9\%), and raising %the threshold SPACE CUT
$\tau$
increases the recall and reduces precision. This implies (i) teacher annotations can identify spurious signals albeit imperfectly, and (ii) a higher $\tau$ results in capturing more spurious signals at the cost of including relevant areas. Interestingly, annotations from the 1\%-teacher are almost as good as those from the 10\%-teacher.

Figure~\ref{fig:ts} shows the final-epoch test performance and contamination sensitivity of using TAP with teacher and baseline random annotations. On PNEU, TAP with teacher annotations results in strong performances on the contaminated data with little sensitivity to spurious signals. Notably, there is very little difference between using the two sets of teacher annotations. The student models notably outperform both the 10\%-teacher (F-score: 0.869) and 1\%-teacher (F-score: 0.668). Random annotations, in contrast, result in performances which rely on spurious correlations for two of three %architectures SPACE CUT
models. This suggests that better-than-random but not necessarily perfect targeting of the spurious signals is needed.  

On KNEE, the success of using teacher annotations more heavily depends on (i) %the threshold level, SPACE CUT
$\tau$, and (ii) the chosen %architecture. SPACE CUT
model.
For DenseNet-121 and VGG-16, an increase in $\tau$ surprisingly results in an increase in contamination sensitivity (a similar trend can also be observed on PNEU). This, along with the poor performance of the random annotations, suggests that noisier annotations cannot be simply offset by penalising a greater area, underscoring again the importance of good targeting. Focusing therefore on $\tau=0.01$, we observe that only DenseNet-121 is successful in maintaining good performance which still suffers from some contamination sensitivity. VGG-16 achieves a seemingly low MAE which exploits spurious correlations. ResNet-18 did not train well with noisy annotations: while contamination sensitivity is low, this may be specious given its high MAE. 

Overall, these results suggest that teacher annotations can be effective substitutes depending on the %difficulty of the task and architecture
task and model. Moreover, a small clean dataset may be sufficient to pre-train a teacher that can output useful annotations.

\begin{figure}[h]
\centering
\includegraphics[width=0.85\columnwidth]{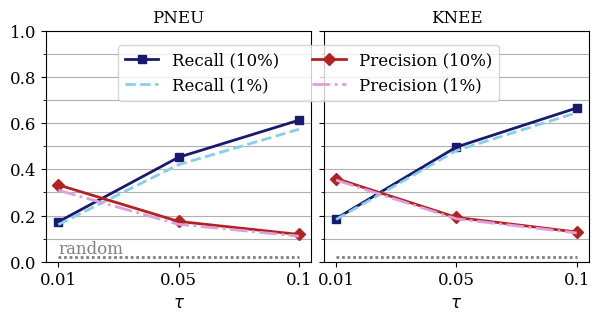} % Reduce the figure size so that it is slightly narrower than the column.
\caption{\emph{Annotation} precision and recall of teachers trained on  10\% or 1\% clean splits at different thresholds.}
\label{fig:overlap}
\end{figure}

\begin{figure}[h]
\centering
\includegraphics[width=0.99\columnwidth]{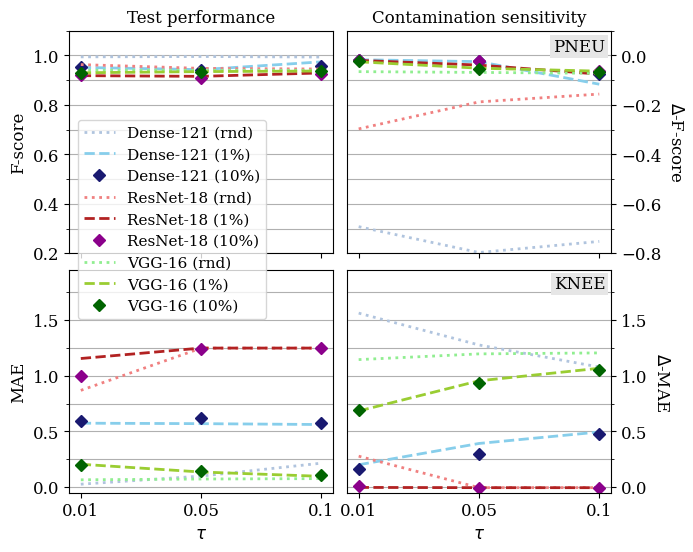} % Reduce the figure size so that it is slightly narrower than the column.
\caption{TAP test performance and contamination sensitivity %for TAP space cut
with random (rnd) and teacher (1\% or 10\%) annotations vs ground-truth. %Higher is better for PNEU, lower is better for KNEE SPACE CUT
Higher/Lower better for PNEU/KNEE.
}
\label{fig:ts}
\end{figure}

\section{Conclusion}
\label{sec:conclusion}
We have proposed TAP,  targeting activations of spurious signals with penalties as a way of mitigating their use in CNNs. TAP avoids expensive second-order derivatives, enabling faster training times and lower memory usage. We have shown TAP's relationship with the popular RRR and RBR losses and demonstrated its effectiveness on three datasets and four CNN architectures. We have also shown that TAP can maintain good performance with noisy teacher annotations which can help reduce otherwise costly ground-truth annotation requirements. We plan to investigate in future work if a variant of TAP can be generalised to other model architectures besides CNNs.

\section*{Acknowledgements}
We thank Hamed Ayoobi, Alice Giroul and anonymous reviewers for providing feedback. Dekai Zhang was funded by the UKRI CDT in AI for Healthcare \url{http://ai4health.io} (Grant No. EP/S023283/1).
Matthew Williams was funded through ICHT and the Imperial/NIHR BRC.
Francesca Toni was partially funded by the %European Research Council (ERC) 
ERC under
the EU’s Horizon 2020 research and innovation programme (Grant Agreement No. 101020934) and 
by 
J.P. Morgan and  the
Royal Academy of Engineering under the Research Chairs and Senior Research Fellowships scheme.

\bibliography{refs}

\appendix
\onecolumn

\section*{Targeted Activation Penalties Help CNNs Ignore Spurious Signals \\ Appendix}
\section{Proof of Theorem~\ref{th:app:relationship}}
\label{appx:proof}
We first re-state Theorem 1 for clarity and provide the detailed proof below.

\setcounter{theorem}{0}
\begin{theorem}
\label{th:app:relationship}
    Suppose we have a CNN with convolutional layers $l =1,\ldots,L$, filters $w^l$ of kernel size $\kappa^l$ and ReLU activation $\sigma$. It is sufficient to optimise all activation map elements $A^{l}_{ab}$ with respect to $w^{l}$ to set the input gradient of pixel $(i,j)$ in layer $l$ (Equation~\ref{eq:input_grad_layer}) to zero, where $(a,b)$ are such that:
    \begin{equation*}
    \begin{aligned}
        i - \kappa^1 - \ldots - \kappa^l + l \leq a \leq i \\
        j - \kappa^1 - \ldots - \kappa^l + l \leq b \leq j
    \end{aligned}
    \label{app:eq:receptive}
    \end{equation*}
\end{theorem}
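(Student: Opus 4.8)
The plan is to fix a target pixel $(i,j)$ and a layer $l$, decompose the tensor $\partial\sigma(Z^l)/\partial X_{ij}$ of Equation~\ref{eq:input_grad_layer} into its scalar components $\partial\sigma(Z^l_{cab})/\partial X_{ij}$, one per channel $c$ and spatial location $(a,b)$, and then argue that each such component is zero once the activation elements in the region of Equation~\ref{eq:receptive} have been minimised over $w^l$. To this end I would split the locations $(a,b)$ into those \emph{outside} the region of Equation~\ref{eq:receptive} and those \emph{inside} it, treating each case separately.

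For the \emph{outside} case I would first pin down the receptive-field bookkeeping by induction on $l$. Reading off the Jacobian $\partial A^l_{ab}/\partial w^l_c$ displayed in the sketch, $A^l_{ab}$ depends on $\delta^{l-1}_{a'b'}=\sigma(Z^{l-1})_{a'b'}$ only for $a\le a'\le a+\kappa^l-1$ and $b\le b'\le b+\kappa^l-1$; the base case $l=1$ gives that $Z^1_{ca_1b_1}$ depends on $X_{a'b'}$ exactly for $a_1\le a'\le a_1+\kappa^1-1$, and composing these ranges shows that $A^l_{ab}$ depends on input pixels $(i',j')$ precisely for $a\le i'\le a+\kappa^1+\cdots+\kappa^l-l$ and $b\le j'\le b+\kappa^1+\cdots+\kappa^l-l$. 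Rearranging $a\le i\le a+\kappa^1+\cdots+\kappa^l-l$ yields exactly Equation~\ref{eq:receptive}. Hence for $(a,b)$ outside that region and any channel $c$, $Z^l_{cab}$ is independent of $X_{ij}$, so $\partial\sigma(Z^l_{cab})/\partial X_{ij}=0$ with nothing further to do.

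For the \emph{inside} case I would use the first-order optimality condition already recorded in the sketch. Since $Z^l_{cab}=\sum_{p,q}w^l_{c,p,q}\,\delta^{l-1}_{a+p,b+q}+b^l_c$ and the patch $\delta^{l-1}$ is constant with respect to $w^l$, we get $\partial A^l_{ab}/\partial w^l_c=\sigma'(Z^l_{cab})$ times that constant patch; equivalently, since $A^l_{ab}=\sum_c\sigma(Z^l_{cab})$ is a sum of non-negative ReLU outputs, driving $A^l_{ab}$ down to its minimum value $0$ forces every pre-activation $Z^l_{cab}\le 0$ and hence $\sigma'(Z^l_{cab})=0$. Because the $(c,a,b)$-component of Equation~\ref{eq:input_grad_layer} factors as $\sigma'(Z^l_{cab})$ multiplied by the $(c,a,b)$-entry of $w^l*\partial\sigma(Z^{l-1})/\partial X_{ij}$, it vanishes as well. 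Collecting the two cases over all $c$ and $(a,b)$ gives $\partial\sigma(Z^l)/\partial X_{ij}=0$, which is the statement.

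The step I expect to be the main obstacle is the claim that minimising $A^l_{ab}$ over $w^l$ drives $\sigma'(Z^l_{cab})$ to zero: this requires reading ``optimise'' as ``drive the penalty to its infimum'' and arguing that such a configuration is reachable with the relevant pre-activations non-positive. Degenerate configurations — for instance a vanishing $\delta^{l-1}$ patch together with a strictly positive bias $b^l_c$, so that $Z^l_{cab}$ cannot be made non-positive by varying $w^l$ alone — would need to be excluded or flagged as a genuine assumption. I would also handle the multi-input-channel case, where the convolution additionally sums over input channels and the Jacobian patch becomes a stack of such patches; this inflates the notation but leaves the logic unchanged. The receptive-field induction, by contrast, is purely combinatorial and should go through without difficulty.
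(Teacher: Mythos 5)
Your proof is correct and follows essentially the same route as the paper's: receptive-field bookkeeping to dispose of the components outside the region of Equation~\ref{eq:receptive}, and the first-order condition on $\partial A^l_{ab}/\partial w^l_c$ (equivalently, driving the non-negative sum $\sum_{c}\sigma(Z^l_{cab})$ to its minimum) to force $\sigma'(Z^l_{cab})=0$ inside it. The degenerate configuration you flag is precisely the assumption the paper makes explicit --- the argument requires the preceding activations $Z^{l-1}$ in the relevant patch to be non-zero.
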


\begin{proof}
    We first derive the area (Equation~\ref{eq:receptive}, also re-stated directly above) within which the input gradient with respect to input $(i,j)$ can be non-zero. All products between matrices or tensors are convolutional products, so we suppress the $*$-operator for brevity. Recalling that $Z^l = w^l  \sigma(Z^{l-1}) + b^l$, with filters $w^l$ of kernel size $\kappa^l$ and bias $b^l$, and letting $Z^0 = X$, the input gradient of pixel $(i,j)$ in layer $l$ is then:
    \begin{equation}
    \label{eq:input_gradient}
    \begin{aligned}
        \frac{\partial }{\partial X_{ij}} \sigma(Z^l) 
            &= \sigma'(Z^l) \odot \frac{\partial }{\partial X_{ij}} \left[ w^l  \sigma(Z^{l-1}) + b^l \right] \\
            &= \sigma'(Z^l) \odot \frac{\partial }{\partial X_{ij}} w^l  \sigma(Z^{l-1}) \\
            &=  \sigma'(Z^l) \odot w^l  \left(\sigma'(Z^{l-1}) \odot \frac{\partial Z^{l-1}}{\partial X_{ij}} \right)   \\
            &=  \sigma'(Z^l) \odot w^l \sigma'(Z^{l-1})
                        \odot  w^l w^{l-1} \sigma'(Z^{l-2})
                        \odot \ldots
                        \odot w^l w^{l-1} w^{l-2} \ldots w^{1} \sigma'(X)
                        \odot w^l w^{l-1} w^{l-2} \ldots w^{1} \frac{\partial X}{\partial X_{ij}}
    \end{aligned}
    \end{equation}

    % \noindent
    Focusing on the last term of the Hadamard product, we can observe that for any $(i,j)$, we must have for a given activation feature $(a,b)$:
        
    \begin{equation*}
    \begin{aligned}
        &\left(w^l w^{l-1} w^{l-2} \ldots w^{1} \frac{\partial X}{\partial X_{ij}} \right)_{ab} \\
            &= \sum_{m=1}^{\kappa^l} \sum_{n=1}^{\kappa^l} w^l_{mn} \left( w^{l-1} w^{l-2} \ldots w^{1} \frac{\partial X}{\partial X_{ij}} \right)_{a+m-1, b+n-1} \\
            &= \sum_{m=1}^{\kappa^l} \sum_{n=1}^{\kappa^l} w^l_{mn} 
                \sum_{m'=1}^{\kappa^{l-1}} \sum_{n'=1}^{\kappa^{l-1}} w^{l-1}_{m'n'}
                \left( w^{l-2} \ldots w^{1} \frac{\partial X}{\partial X_{ij}} \right)_{a+m-1+m'-1, b+n-1+n'-1} \\
            &= \sum_{m=1}^{\kappa^l} \sum_{n=1}^{\kappa^l} w^l_{mn} 
                \sum_{m'=1}^{\kappa^{l-1}} \sum_{n'=1}^{\kappa^{l-1}} w^{l-1}_{m'n'}
                \ldots
                \sum_{m^{l-1}=1}^{\kappa^{1}} \sum_{n^{l-1}=1}^{\kappa^{1}} w^{1}_{m^{l-1}n^{l-1}}
                \left( \frac{\partial X}{\partial X_{ij}} \right)_{a+m+m'+\ldots+m^{l-1}-l, b+n+n'+\ldots+n^{l-1}-l}
    \end{aligned}
    \end{equation*}
    where the last term can only be non-zero when $(a,b)$ are such that $(i,j)$ is captured within its receptive field:
    \begin{equation*}
    \begin{aligned}
        a \leq i \leq a + \kappa^1 + \ldots + \kappa^l - l \\
        b \leq j \leq b + \kappa^1 + \ldots + \kappa^l - l
    \end{aligned}
    \end{equation*}
    which we can re-write to give us Equation~\ref{eq:receptive}:
    \begin{equation*}
    \begin{aligned}
        i - \kappa^1 - \ldots - \kappa^l + l \leq a \leq i  \\
        j - \kappa^1 - \ldots - \kappa^l + l \leq b \leq j
    \end{aligned}
    \end{equation*}

    We next show that optimising all of the activations elements $(a,b)$ that fulfil Equation~\ref{eq:receptive} of layer $l$ with respect to its filters $w^l$ is sufficient to set the input gradient of pixel $(i,j)$ in that layer to zero. Note that Equation~\ref{eq:input_gradient} implies that it suffices to set $\sigma'(Z^{l})_{a,b}=0$. The first-order condition for filters $w^l_c$, where $c=1,\ldots,C^l$, sets the following to zero:

    \begin{equation*}
    \begin{aligned}
        \frac{\partial A^l_{ab}}{\partial w^l_c} 
        &= \frac{\partial }{\partial w^l_c} \sum_{k=1}^{C^l} \sigma(Z^l_{kab}) \\
        &= \frac{\partial }{\partial w^l_c} \sigma(Z^l_{cab}) \\
        &= \frac{\partial }{\partial w^l_c} \sigma\left(
            \sum_{m=1}^{\kappa^l} \sum_{n=1}^{\kappa^l} w^l_{cmn} \sigma(Z^{l-1}_{a+m-1,b+n-1}) + b^l_{jab} 
            \right) \\
        &= \sigma'(Z^l_{cab})
            \begin{bmatrix}
                \sigma(Z^{l-1}_{a,b}) & \dots & \sigma(Z^{l-1}_{a+\kappa^l-1,b}) \\
                \vdots & \ddots & \vdots \\
                \sigma(Z^{l-1}_{a,b+\kappa^l-1}) & \dots & \sigma(Z^{l-1}_{a+\kappa^l-1,b+\kappa^l-1})     
            \end{bmatrix}
    \end{aligned}
    \end{equation*}

    % \noindent
    As the output of the preceding layer is constant with respect to $w^l_c$, in the optimum with non-zero activations $Z^{l-1}$ and ReLU activation function $\sigma$, we must have that $\sigma'(Z^l_{cab}) = 0$ for all $c=1,\ldots,C^l$ and all $(a,b)$ that fulfil Equation~$\ref{eq:receptive}$. Therefore $\partial \sigma(Z^l) / \partial X_{ij} = 0$.   

\end{proof}

\clearpage
\section{Architecture of Two-Layer CNN}
\label{appx:architecture}

\begin{figure*}[h!]
\centering
\includegraphics[width=0.8\textwidth]{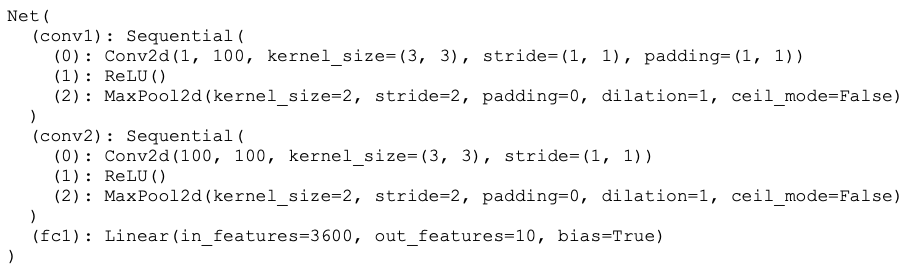} % Reduce the figure size so that it is slightly narrower than the column.
\caption{Overview of two-layer CNN architecture.}
\label{app:fig:two_layer}
\end{figure*}

\section{Hyperparameter Choices}
\label{appx:hyperparameters}

\begin{table*}[h]
\centering
\begin{tabular}{lcccccccccccc}
\toprule
       & \multicolumn{3}{c}{DenseNet-121}  & \multicolumn{3}{c}{ResNet-18}     & \multicolumn{3}{c}{VGG-16}   & \multicolumn{3}{c}{Two-layer CNN}         \\ \cmidrule(lr){2-4}\cmidrule(lr){5-7}\cmidrule(lr){8-10}\cmidrule(lr){11-13} 
     Data & TAP       & RRR       & RBR      & TAP      & RRR      & RBR     & TAP     & RRR     & RBR      & TAP     & RRR     & RBR \\ \midrule
PNEU & $10^{-4}$     & $10^{-7}$     & $10^{6}$    & $10^{-3}$    & $10^{-3}$    & $10^{7}$   & $10^{-7}$   & $10^{-3}$   & $10^{5}$ & - & - & - \\
KNEE & $10^{-5}$     & $10^{-2}$     & $10^{5}$    & $10^{-5}$    & $10^{-2}$    & $10^{5}$   & $10^{-6}$   & $10^{-2}$   & $10^{3}$ & - & - & - \\ 
MNIST & - & - & - & - & - & - & - & - & - & $10^{-3}$ & $10^{-3}$ & $10^{4}$ \\ \bottomrule
\end{tabular}
\caption{Choices for $\lambda$ for all XS losses with ground-truth annotations.}
\end{table*}

\begin{table*}[h]
\centering
\begin{tabular}{lcccccccccccc}
\toprule
       & \multicolumn{3}{c}{DenseNet-121}  & \multicolumn{3}{c}{ResNet-18}     & \multicolumn{3}{c}{VGG-16}   & \multicolumn{3}{c}{Two-layer CNN}         \\ \cmidrule(lr){2-4}\cmidrule(lr){5-7}\cmidrule(lr){8-10}\cmidrule(lr){11-13} 
     Data & TAP       & RRR       & RBR      & TAP      & RRR      & RBR     & TAP     & RRR     & RBR      & TAP     & RRR     & RBR \\ \midrule
PNEU  & $10^{-6}$ & $10^{-8}$ & $10^{5}$ & $10^{-6}$ & $10^{-4}$ & $10^{2}$ & $10^{-7}$ & $10^{-3}$ & $10^{0}$ & -     & -     & -     \\
KNEE  & $10^{-5}$ & $10^{-2}$ & $10^{3}$ & $10^{-5}$ & $10^{-1}$ & $10^{4}$ & $10^{-6}$ & $10^{-2}$ & $10^{9}$ & -     & -     & -     \\
MNIST & -     & -     & -     & -     & -     & -     & -     & -     & -     & $10^{-4}$ & $10^{-2}$ & $10^{4}$ \\ \bottomrule
\end{tabular}
\caption{Choices for $\lambda$ for all XS losses with teacher annotations. TAP results reported in main paper, RRR and RBR results reported in Appendix~\ref{appx:rrr_rbr_teacher}.}
\end{table*}

\clearpage
\section{Quantitative Comparison of Input Gradients and Activations}
\label{appx:quant_comparison}

In this section, we evaluate how successful the XS methods (RRR, RBR, TAP) are at mitigating the influence of spurious signals by finding the overlap between the location of spurious signals and the location of the top-quartile of input gradients and activations by magnitude. We consider the activations of the four penalised convolutional layers (the last four of five for VGG-16). As the activations are often smaller than the input, we upsample to match dimensions. We report recall and precision:
\begin{equation}
\begin{aligned}
    Recall = \frac{|(\text{spurious} \cap \text{top-quartile})|}{|\text{spurious}|} & & &    Precision = \frac{|(\text{spurious} \cap \text{top-quartile})|}{|\text{top-quartile}|}    
\end{aligned}
\end{equation}

Intuitively, we would expect that a model that is not susceptible to spurious signals to have very little overlap between its top-quartile input gradients / activations and spurious signals. Tables~\ref{app:table:overlap_pneu} and \ref{app:table:overlap_knee} show that TAP successfully suppresses the presence of spurious signals in the activations, as both recall and precision are either low throughout or decrease towards zero in the deeper layers (such as for DenseNet-121 and VGG-16). Notably, TAP suppresses the presence of spurious signals in the input gradients, even though it does not directly target them.

\begin{table*}[h]
\centering
\resizebox{\textwidth}{!}{
\begin{tabular}{llcccccccccc}
\toprule
                           & \multicolumn{1}{l}{} & \multicolumn{5}{c}{Recall}                                                                                                                                                      & \multicolumn{5}{c}{Precision}                                                                                                                                                   \\ \cmidrule(lr){3-7} \cmidrule(lr){8-12}
Architecture               & Case                 & \multicolumn{1}{c}{IG} & \multicolumn{1}{c}{Act 1} & \multicolumn{1}{c}{Act 2} & \multicolumn{1}{c}{Act 3} & \multicolumn{1}{c}{Act 4} & \multicolumn{1}{c}{IG} & \multicolumn{1}{c}{Act 1} & \multicolumn{1}{c}{Act 2} & \multicolumn{1}{c}{Act 3} & \multicolumn{1}{c}{Act 4} \\ \midrule
\multirow{5}{*}{Dense-121} & Base                 & 0.01 (0.00)                         & 0.80 (0.03)                      & 0.60 (0.05)                      & 0.61 (0.03)                      & 0.20 (0.02)                      & 0.00 (0.00)                         & 0.06 (0.00)                      & 0.05 (0.00)                      & 0.05 (0.00)                      & 0.02 (0.00)                      \\
                           & No XS                & 0.08 (0.03)                         & 0.77 (0.08)                      & 0.55 (0.03)                      & 0.41 (0.09)                      & 0.88 (0.17)                      & 0.01 (0.00)                         & 0.06 (0.01)                      & 0.04 (0.00)                      & 0.03 (0.01)                      & 0.07 (0.01)                      \\
                           & RRR                  & 0.08 (0.03)                         & 0.76 (0.08)                      & 0.55 (0.04)                      & 0.41 (0.10)                      & 0.89 (0.13)                      & 0.01 (0.00)                         & 0.06 (0.01)                      & 0.04 (0.00)                      & 0.03 (0.01)                      & 0.07 (0.01)                      \\
                           & RBR                  & 0.01 (0.01)                         & 0.56 (0.09)                      & 0.42 (0.21)                      & 0.48 (0.28)                      & 0.33 (0.10)                      & \textbf{0.00 (0.00)}                & 0.04 (0.01)                      & 0.03 (0.02)                      & 0.04 (0.02)                      & 0.03 (0.01)                      \\
                           & TAP                  & \textbf{0.00 (0.00)}                & \textbf{0.44 (0.06)}             & \textbf{0.22 (0.01)}             & \textbf{0.10 (0.03)}             & \textbf{0.00 (0.00)}             & \textbf{0.00 (0.00)}                & \textbf{0.03 (0.00)}             & \textbf{0.02 (0.00)}             & \textbf{0.01 (0.00)}             & \textbf{0.00 (0.00)}             \\ \cmidrule(lr){1-12}
\multirow{5}{*}{ResNet-18} & Base                 & 0.00 (0.00)                         & 0.97 (0.01)                      & 0.82 (0.02)                      & 0.84 (0.01)                      & 0.06 (0.02)                      & 0.00 (0.00)                         & 0.07 (0.00)                      & 0.06 (0.00)                      & 0.06 (0.00)                      & 0.00 (0.00)                      \\
                           & No XS                & 0.08 (0.03)                         & 0.99 (0.00)                      & 0.88 (0.02)                      & 0.95 (0.01)                      & 0.74 (0.14)                      & 0.01 (0.00)                         & 0.08 (0.00)                      & 0.07 (0.00)                      & 0.07 (0.00)                      & 0.06 (0.01)                      \\
                           & RRR                  & 0.02 (0.00)                         & 0.98 (0.01)                      & 0.89 (0.01)                      & 0.94 (0.01)                      & 0.41 (0.14)                      & \textbf{0.00 (0.00)}                & 0.08 (0.00)                      & 0.07 (0.00)                      & 0.07 (0.00)                      & 0.03 (0.01)                      \\
                           & RBR                  & 0.03 (0.05)                         & 0.92 (0.04)                      & 0.86 (0.03)                      & 0.76 (0.11)                      & 0.19 (0.26)                      & \textbf{0.00 (0.00)}                & 0.07 (0.00)                      & 0.07 (0.00)                      & 0.06 (0.01)                      & 0.01 (0.02)                      \\
                           & TAP                  & \textbf{0.00 (0.00)}                & \textbf{0.00 (0.00)}             & \textbf{0.00 (0.00)}             & \textbf{0.00 (0.00)}             & \textbf{0.00 (0.00)}             & \textbf{0.00 (0.00)}                & \textbf{0.00 (0.00)}             & \textbf{0.00 (0.00)}             & \textbf{0.00 (0.00)}             & \textbf{0.00 (0.00)}             \\ \cmidrule(lr){1-12}
\multirow{5}{*}{VGG-16}    & Base                 & 0.01 (0.00)                         & 0.96 (0.00)                      & 0.97 (0.00)                      & 0.97 (0.00)                      & 0.06 (0.01)                      & 0.00 (0.00)                         & 0.07 (0.00)                      & 0.08 (0.00)                      & 0.07 (0.00)                      & 0.00 (0.00)                      \\
                           & No XS                & 0.26 (0.02)                         & 0.96 (0.00)                      & 0.99 (0.00)                      & 0.99 (0.00)                      & 0.84 (0.03)                      & 0.02 (0.00)                         & 0.07 (0.00)                      & 0.08 (0.00)                      & 0.08 (0.00)                      & 0.07 (0.00)                      \\
                           & RRR                  & \textbf{0.00 (0.00)}                & 0.96 (0.00)                      & 0.97 (0.00)                      & 0.97 (0.00)                      & 0.05 (0.01)                      & \textbf{0.00 (0.00)}                & 0.07 (0.00)                      & 0.08 (0.00)                      & 0.08 (0.00)                      & \textbf{0.00 (0.00)}             \\
                           & RBR                  & 0.08 (0.01)                         & 0.96 (0.00)                      & 0.99 (0.00)                      & 1.00 (0.00)                      & 0.86 (0.02)                      & 0.01 (0.00)                         & 0.07 (0.00)                      & 0.08 (0.00)                      & 0.08 (0.00)                      & 0.07 (0.00)                      \\
                           & TAP                  & 0.01 (0.00)                         & \textbf{0.67 (0.00)}             & \textbf{0.11 (0.01)}             & \textbf{0.21 (0.02)}             & \textbf{0.01 (0.00)}             & \textbf{0.00 (0.00)}                & \textbf{0.05 (0.00)}             & \textbf{0.01 (0.00)}             & \textbf{0.02 (0.00)}             & \textbf{0.00 (0.00)}  \\ \bottomrule
\end{tabular}} 
\caption{Recall and precision of top-quartile input gradients (IG) and activations at four layers (Act 1--4) in capturing spurious signals for PNEU. Lower values equate to less overlap with spurious signals. Base was trained without XS on clean data, No XS was trained without XS on contaminated data. Lowest overlap with spurious signals amongst XS methods (RRR, RBR, TAP) in bold. TAP minimises overlap between top-quartile activations and spurious signals and indirectly achieves the same for input gradients.}
\label{app:table:overlap_pneu}
\end{table*}

\begin{table*}[h]
\centering
\resizebox{\textwidth}{!}{
\begin{tabular}{llcccccccccc}
\toprule
                           & \multicolumn{1}{l}{} & \multicolumn{5}{c}{Recall}                                                                                                                                                      & \multicolumn{5}{c}{Precision}                                                                                                                                                   \\ \cmidrule(lr){3-7} \cmidrule(lr){8-12}
Architecture               & Case                 & \multicolumn{1}{c}{IG} & \multicolumn{1}{c}{Act 1} & \multicolumn{1}{c}{Act 2} & \multicolumn{1}{c}{Act 3} & \multicolumn{1}{c}{Act 4} & \multicolumn{1}{c}{IG} & \multicolumn{1}{c}{Act 1} & \multicolumn{1}{c}{Act 2} & \multicolumn{1}{c}{Act 3} & \multicolumn{1}{c}{Act 4} \\ \midrule
\multirow{5}{*}{Dense-121} & Base                 & 0.00 (0.00)                         & 0.86 (0.01)                      & 0.80 (0.04)                      & 0.81 (0.02)                      & 0.18 (0.04)                      & 0.00 (0.00)                         & 0.07 (0.00)                      & 0.06 (0.00)                      & 0.06 (0.00)                      & 0.01 (0.00)                      \\
                           & No XS                & 0.33 (0.01)                         & 0.83 (0.02)                      & 0.54 (0.03)                      & 0.58 (0.06)                      & 1.00 (0.00)                      & 0.03 (0.00)                         & 0.06 (0.00)                      & 0.04 (0.00)                      & 0.05 (0.00)                      & 0.08 (0.00)                      \\
                           & RRR                  & 0.29 (0.08)                         & 0.98 (0.03)                      & 0.95 (0.10)                      & 0.88 (0.08)                      & 1.00 (0.00)                      & 0.02 (0.01)                         & 0.08 (0.00)                      & 0.07 (0.01)                      & 0.07 (0.01)                      & 0.08 (0.00)                      \\
                           & RBR                  & 0.66 (0.15)                         & 0.80 (0.28)                      & 0.86 (0.30)                      & \textbf{0.81 (0.18)}             & 0.99 (0.02)                      & 0.05 (0.01)                         & \textbf{0.06 (0.02)}             & 0.07 (0.02)                      & \textbf{0.06 (0.01)}             & 0.08 (0.00)                      \\
                           & TAP                  & \textbf{0.00 (0.00)}                & \textbf{0.74 (0.03)}             & \textbf{0.79 (0.03)}             & 0.99 (0.00)                      & \textbf{0.00 (0.00)}             & \textbf{0.00 (0.00)}                & \textbf{0.06 (0.00)}             & \textbf{0.06 (0.00)}             & 0.08 (0.00)                      & \textbf{0.00 (0.00)}             \\ \cmidrule(lr){1-12}
\multirow{5}{*}{ResNet-18} & Base                 & 0.00 (0.00)                         & 0.99 (0.00)                      & 0.93 (0.01)                      & 0.94 (0.00)                      & 0.07 (0.03)                      & 0.00 (0.00)                         & 0.08 (0.00)                      & 0.07 (0.00)                      & 0.07 (0.00)                      & 0.01 (0.00)                      \\
                           & No XS                & 0.35 (0.03)                         & 1.00 (0.00)                      & 0.99 (0.00)                      & 1.00 (0.00)                      & 1.00 (0.00)                      & 0.03 (0.00)                         & 0.08 (0.00)                      & 0.08 (0.00)                      & 0.08 (0.00)                      & 0.08 (0.00)                      \\
                           & RRR                  & 0.22 (0.04)                         & 1.00 (0.00)                      & 1.00 (0.00)                      & 1.00 (0.00)                      & 0.75 (0.15)                      & 0.02 (0.00)                         & 0.08 (0.00)                      & 0.08 (0.00)                      & 0.08 (0.00)                      & 0.06 (0.01)                      \\
                           & RBR                  & 0.52 (0.11)                         & 0.97 (0.01)                      & 0.99 (0.00)                      & 1.00 (0.00)                      & 0.99 (0.01)                      & 0.04 (0.01)                         & 0.08 (0.00)                      & 0.08 (0.00)                      & 0.08 (0.00)                      & 0.08 (0.00)                      \\
                           & TAP                  & \textbf{0.00 (0.00)}                & \textbf{0.03 (0.00)}             & \textbf{0.09 (0.00)}             & \textbf{0.21 (0.01)}             & \textbf{0.00 (0.00)}             & \textbf{0.00 (0.00)}                & \textbf{0.00 (0.00)}             & \textbf{0.01 (0.00)}             & \textbf{0.02 (0.00)}             & \textbf{0.00 (0.00)}             \\ \cmidrule(lr){1-12}
\multirow{5}{*}{VGG-16}    & Base                 & 0.01 (0.00)                         & 0.99 (0.00)                      & 1.00 (0.00)                      & 0.99 (0.00)                      & 0.02 (0.00)                      & 0.00 (0.00)                         & 0.08 (0.00)                      & 0.08 (0.00)                      & 0.08 (0.00)                      & 0.00 (0.00)                      \\
                           & No XS                & 0.66 (0.01)                         & 0.99 (0.00)                      & 1.00 (0.00)                      & 1.00 (0.00)                      & 1.00 (0.00)                      & 0.05 (0.00)                         & 0.08 (0.00)                      & 0.08 (0.00)                      & 0.08 (0.00)                      & 0.08 (0.00)                      \\
                           & RRR                  & \textbf{0.02 (0.00)}                & 0.99 (0.00)                      & 1.00 (0.00)                      & 1.00 (0.00)                      & 1.00 (0.00)                      & \textbf{0.00 (0.00)}                & 0.08 (0.00)                      & 0.08 (0.00)                      & 0.08 (0.00)                      & 0.08 (0.00)                      \\
                           & RBR                  & 0.64 (0.23)                         & 0.99 (0.00)                      & 1.00 (0.00)                      & 1.00 (0.00)                      & 0.97 (0.07)                      & 0.05 (0.02)                         & 0.08 (0.00)                      & 0.08 (0.00)                      & 0.08 (0.00)                      & 0.08 (0.01)                      \\
                           & TAP                  & 0.07 (0.10)                         & \textbf{0.60 (0.05)}             & \textbf{0.23 (0.08)}             & \textbf{0.36 (0.20)}             & \textbf{0.06 (0.10)}             & 0.01 (0.01)                         & \textbf{0.05 (0.00)}             & \textbf{0.02 (0.01)}             & \textbf{0.03 (0.02)}             & \textbf{0.00 (0.01)}   \\ \bottomrule
\end{tabular}} 
\caption{Recall and precision of top-quartile input gradients (IG) and activations at four layers (Act 1--4) in capturing spurious signals for KNEE. Lower values equate to less overlap with spurious signals. Base was trained without XS on clean data, No XS was trained without XS on contaminated data. Lowest overlap with spurious signals amongst XS methods (RRR, RBR, TAP) in bold. TAP minimises overlap between top-quartile activations and spurious signals and indirectly achieves the same for input gradients.}
\label{app:table:overlap_knee}
\end{table*}

\clearpage
\section{Additional Visual Comparison of Input Gradients and Activations}
\label{appx:visual_comparison}

\begin{figure}[h!]
\centering
\includegraphics[height=0.86\textheight]{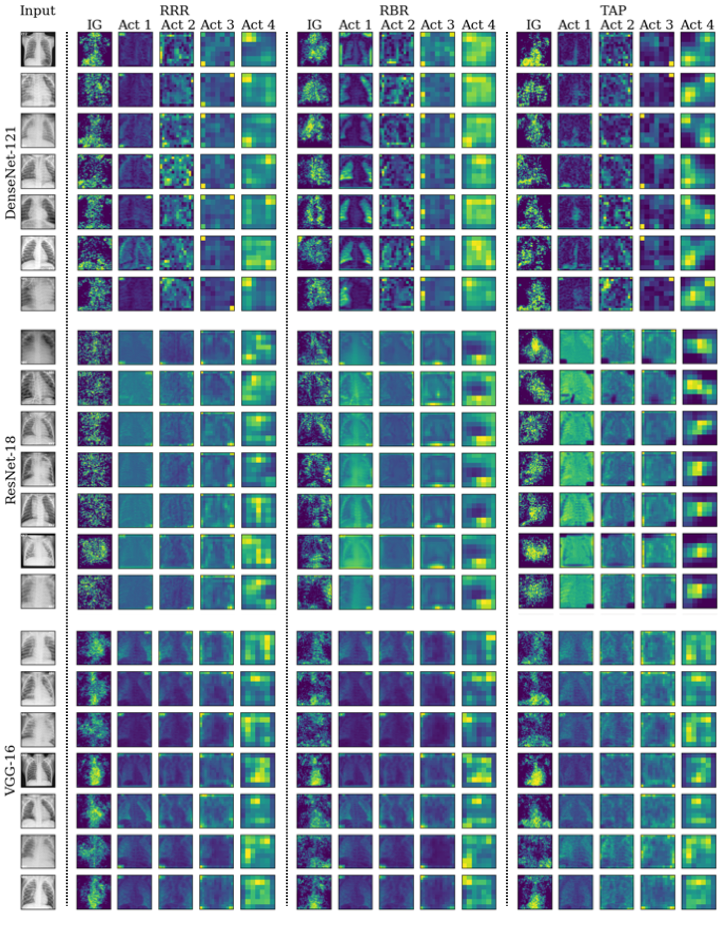} % Reduce the figure size so that it is slightly narrower than the column.
\caption{Randomly chosen PNEU images. Input gradients and activations at penalised layers of three different architectures. With RRR and RBR, input gradients ostensibly suggest that spurious signals are suppressed, but intermediate activations show spurious signals (see corners) can re-emerge in deeper layers. In contrast, TAP successfully mitigates their re-emergence which notably suppresses their input gradients as well.}
\label{app:fig:explanations-pneu}
\end{figure}

\begin{figure*}[h!]
\centering
\includegraphics[height=0.86\textheight]{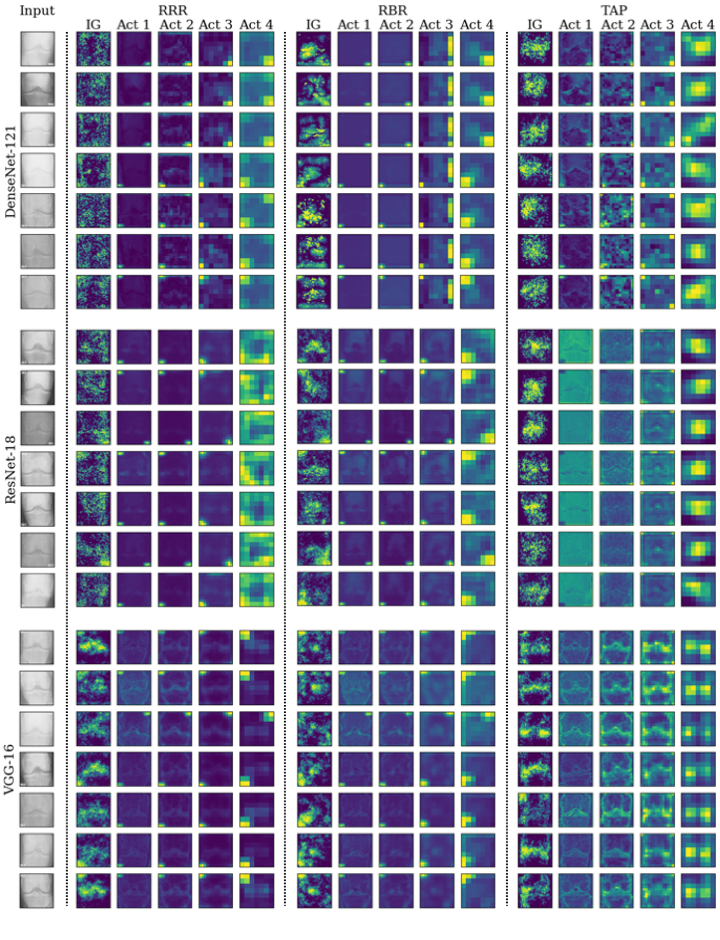} % Reduce the figure size so that it is slightly narrower than the column.
\caption{Randomly chosen KNEE images. Input gradients and activations at penalised layers of three different architectures. With RRR and RBR, input gradients ostensibly suggest that spurious signals are suppressed, but intermediate activations show spurious signals (see corners) can re-emerge in deeper layers. In contrast, TAP successfully mitigates their re-emergence which notably suppresses their input gradients as well.}
\label{app:fig:explanations-knee}
\end{figure*}

\clearpage
\section{Additional Experiments with Stripe Artifact}
\label{appx:stripe}

In this section we describe experiments on an additional contamination of the two medical datasets. Instead of adding text artifacts, we add a vertical stripe to the sides of the images (Figure~\ref{app:stripe_data}). The stripes are 6 pixels wide and randomly placed up to 15 pixels away from the edge. For PNEU, we set the brightness of the stripe to $255 - label * 127$ for $SC(X^*, y)$. We swap the assignment between the two labels for the permuted version $SC(X^*, permute(y))$. For KNEE, we set the brightness of the artifact to $255 - label * 51$ for $SC(X^*, y)$. We randomise the assigment between labels for the permuted version $SC(X^*, permute(y))$. We tune $\lambda$ as before. Table~\ref{app:table:stripe_paras} states the chosen values.

\begin{figure*}[h!]
\centering
\includegraphics[width=0.6\textwidth]{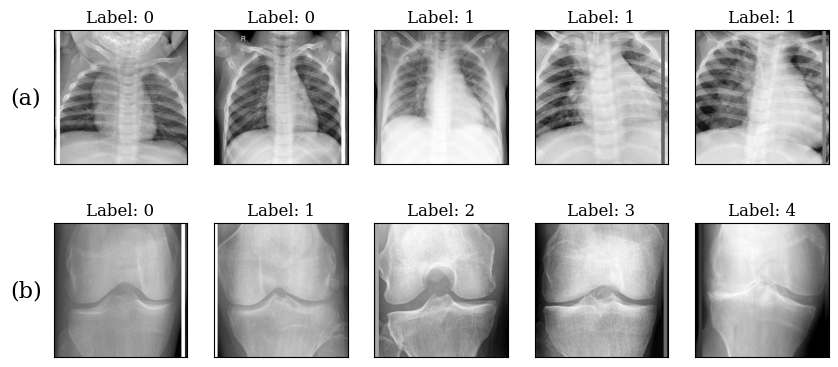} % Reduce the figure size so that it is slightly narrower than the column.
\caption{Overview of (a) PNEU and (b) KNEE with stripe artifacts.}
\label{app:stripe_data}
\end{figure*}

Figure~\ref{app:fig:pneu_stripe} shows that TAP is successful in protecting models from learning spurious signals on PNEU. Save for an uptick in contamination sensitivity with DenseNet-121, TAP is also successful in protecting models on KNEE. We state detailed tabular values for contaminated, permuted and clean data in Table~\ref{app:table:pneu_stripe} for PNEU and Table~\ref{app:table:knee_stripe} for KNEE.

\begin{figure*}[ht]
\centering
\begin{subfigure}{0.5\columnwidth}
\centering
\includegraphics[height=\textwidth]{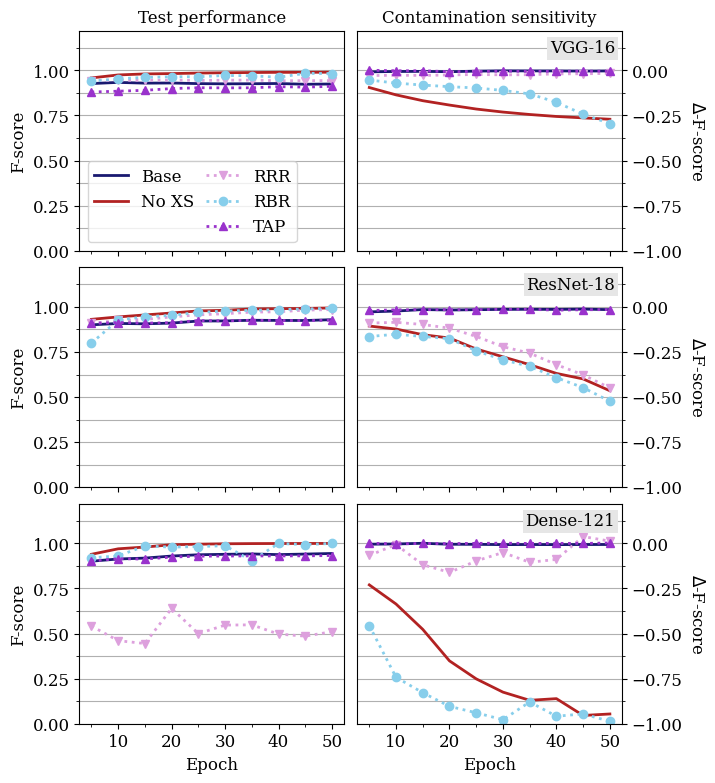}     
\caption{PNEU}
\end{subfigure}%
\begin{subfigure}{0.5\columnwidth}
\centering
\includegraphics[height=\textwidth]{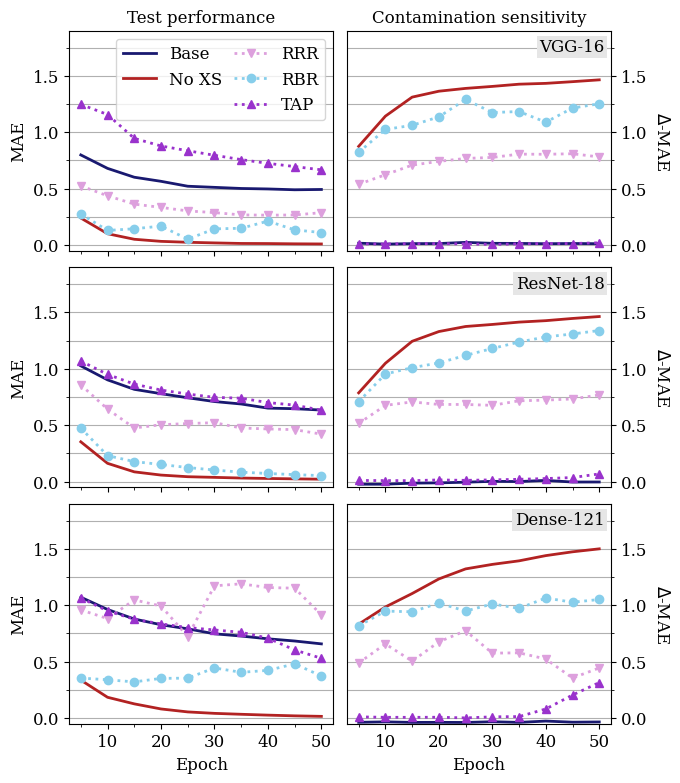}
\caption{KNEE}
\end{subfigure}
\caption{Results on PNEU and KNEE with stripe artifacts and ground-truth annotations. Comparisons of XS losses (RRR, RBR, TAP) against a Base model trained on clean data and a No XS model trained without XS. Higher is better for PNEU, lower is better for KNEE.}
\label{app:fig:pneu_stripe}
\end{figure*}

\clearpage
\begin{table*}[ht]
\centering
\begin{tabular}{lccccccccc}
\toprule
       & \multicolumn{3}{c}{DenseNet-121}  & \multicolumn{3}{c}{ResNet-18}     & \multicolumn{3}{c}{VGG-16}   \\ \cmidrule(lr){2-4}\cmidrule(lr){5-7}\cmidrule(lr){8-10}
     Data & TAP       & RRR       & RBR      & TAP      & RRR      & RBR     & TAP     & RRR     & RBR      \\ \midrule
PNEU & $10^{-4}$ & $10^{0}$ & $10^{3}$ & $10^{-5}$ & $10^{-4}$ & $10^{5}$ & $10^{-5}$ & $10^{-3}$ & $10^{6}$ \\
KNEE & $10^{-2}$ & $10^{-4}$ & $10^{4}$ & $10^{-3}$ & $10^{-2}$ & $10^{3}$ & $10^{-5}$ & $10^{-1}$ & $10^{4}$ \\ \bottomrule
\end{tabular}
\caption{Choices for $\lambda$ for XS losses with ground-truth annotations. PNEU and KNEE with stripe artifacts.}
\label{app:table:stripe_paras}
\end{table*}

We present tabular results in Tables~\ref{app:fig:pneu_stripe} and \ref{app:table:knee_stripe} on three versions of the test split: (i) the contaminated version $SC(X^*,y)$, (ii) the version with the permuted spurious signals $SC(X^*,permute(y))$ and (iii) a clean version $X^*$. 

\begin{table*}[ht]
\centering
\resizebox{\textwidth}{!}{
\begin{tabular}{@{}lccccccccc@{}}
\toprule
       & \multicolumn{3}{c}{DenseNet-121}  & \multicolumn{3}{c}{ResNet-18}     & \multicolumn{3}{c}{VGG-16}        \\ \cmidrule(lr){2-4}\cmidrule(lr){5-7}\cmidrule(lr){8-10} 
       Case & Contaminated  & Permuted      & Clean         & Contaminated  & Permuted      & Clean         & Contaminated  & Permuted      & Clean         \\ \midrule
Base  & 0.944 (0.005)          & 0.938 (0.005)          & 0.945 (0.005)          & 0.929 (0.028)          & 0.914 (0.020)          & 0.924 (0.018)          & 0.925 (0.007)          & 0.922 (0.009)          & 0.919 (0.008)          \\ 
No XS & 1.000 (0.000)          & 0.054 (0.112)          & 0.821 (0.057)          & 0.994 (0.006)          & 0.529 (0.220)          & 0.895 (0.030)          & 0.990 (0.002)          & 0.719 (0.009)          & 0.846 (0.017)          \\
RRR   & 0.510 (0.329)          & 0.523 (0.215)          & 0.528 (0.279)          & 0.985 (0.013)          & 0.536 (0.258)          & 0.836 (0.026)          & \textbf{0.943 (0.018)} & \textbf{0.927 (0.018)} & \textbf{0.926 (0.018)} \\
RBR   & 1.000 (0.001)          & 0.013 (0.017)          & 0.781 (0.013)          & 0.992 (0.006)          & 0.469 (0.166)          & 0.830 (0.029)          & 0.982 (0.011)          & 0.688 (0.243)          & 0.840 (0.046)          \\
TAP   & \textbf{0.933 (0.005)} & \textbf{0.935 (0.006)} & \textbf{0.936 (0.003)} & \textbf{0.926 (0.024)} & \textbf{0.911 (0.019)} & \textbf{0.906 (0.027)} & 0.912 (0.008)          & 0.907 (0.007)          & 0.910 (0.008) \\ \bottomrule
\end{tabular}}
\caption{Final-epoch F-score and standard deviation over five runs on PNEU. PNEU with stripe artifacts. Higher is better. Best permuted performance amongst XS methods in bold.}
\label{app:table:pneu_stripe}
\end{table*}

\begin{table*}[hb]
\centering
\resizebox{\textwidth}{!}{
\begin{tabular}{@{}lccccccccc@{}}
\toprule
       & \multicolumn{3}{c}{DenseNet-121}  & \multicolumn{3}{c}{ResNet-18}     & \multicolumn{3}{c}{VGG-16}        \\ \cmidrule(lr){2-4}\cmidrule(lr){5-7}\cmidrule(lr){8-10} 
       Case & Contaminated  & Permuted      & Clean         & Contaminated  & Permuted      & Clean         & Contaminated  & Permuted      & Clean         \\ \midrule
Base  & 0.658 (0.028)          & 0.623 (0.016)          & 0.609 (0.026)          & 0.634 (0.021)          & 0.630 (0.023)          & 0.607 (0.019)          & 0.493 (0.010)          & 0.505 (0.008)          & 0.485 (0.011)          \\ 
No XS & 0.015 (0.006)          & 1.515 (0.047)          & 1.485 (0.261)          & 0.021 (0.007)          & 1.484 (0.062)          & 1.803 (0.011)          & 0.010 (0.002)          & 1.475 (0.011)          & 1.710 (0.034)          \\
RRR   & 0.916 (0.202)          & 1.360 (0.055)          & 1.427 (0.060)          & 0.420 (0.037)          & 1.188 (0.023)          & 1.171 (0.063)          & 0.287 (0.056)          & 1.068 (0.020)          & 0.921 (0.052)          \\
RBR   & 0.373 (0.481)          & 1.424 (0.114)          & 1.669 (0.175)          & 0.053 (0.009)          & 1.392 (0.016)          & 1.615 (0.071)          & 0.111 (0.146)          & 1.364 (0.099)          & 1.628 (0.231)          \\
TAP   & \textbf{0.530 (0.194)} & \textbf{0.845 (0.112)} & \textbf{0.848 (0.149)} & \textbf{0.633 (0.048)} & \textbf{0.700 (0.025)} & \textbf{0.720 (0.053)} & \textbf{0.667 (0.011)} & \textbf{0.687 (0.009)} & \textbf{0.684 (0.008)} \\ \bottomrule
\end{tabular}}
\caption{Final-epoch mean absolute error and standard deviation over five runs on KNEE. KNEE with stripe artifacts. Lower is better. Best permuted performance amongst XS methods in bold.}
\label{app:table:knee_stripe}
\end{table*}

\clearpage
\section{Tabular Results: Ground-Truth Annotations}
\label{appx:gt_table}

We present tabular results on three versions of the test split: (i) the contaminated version $SC(X^*,y)$, (ii) the version with the permuted spurious signals $SC(X^*,permute(y))$ and (iii) a clean version $X^*$. A model that makes use of the spurious correlations from the contaminated data is expected to perform worse on the permuted and clean versions of the data. 

In Table~\ref{app:table:gt_mnist}, TAP is slightly outperformed by RRR and RBR on MNIST using the shallow two-layer CNN. TAP is specialised for CNNs. Shallower CNNs tend to have fully connected layers that make up a greater share of the model parameters than in deep CNNs. We conjecture that TAP may perform less well in such instances in comparison to RRR and RBR which are architecture-agnostic. We note, however, that TAP---while slightly worse---still matches RRR and RBR very closely.

\begin{table*}[ht]
\centering
\begin{tabular}{@{}lccc@{}}
\toprule
      Case & Contaminated  & Permuted      & Clean         \\ \midrule
Base  & 0.988 (0.001)          & 0.987 (0.001)          & 0.988 (0.001)          \\ 
No XS & 0.995 (0.001)          & 0.819 (0.006)          & 0.627 (0.017)          \\
RRR   & \textbf{0.987 (0.001)} & \textbf{0.987 (0.001)} & \textbf{0.987 (0.001)} \\
RBR   & 0.992 (0.001)          & 0.982 (0.001)          & 0.982 (0.004)          \\
TAP   & 0.983 (0.001)          & 0.979 (0.002)          & 0.979 (0.002)  \\ \bottomrule
\end{tabular}
\caption{Final-epoch accuracy and standard deviation over five runs on MNIST. Higher is better. Best permuted performance amongst XS methods in bold.}
\label{app:table:gt_mnist}
\end{table*}

\begin{table*}[ht]
\centering
\resizebox{\textwidth}{!}{
\begin{tabular}{@{}lccccccccc@{}}
\toprule
       & \multicolumn{3}{c}{DenseNet-121}  & \multicolumn{3}{c}{ResNet-18}     & \multicolumn{3}{c}{VGG-16}        \\ \cmidrule(lr){2-4}\cmidrule(lr){5-7}\cmidrule(lr){8-10} 
       Case & Contaminated  & Permuted      & Clean         & Contaminated  & Permuted      & Clean         & Contaminated  & Permuted      & Clean         \\ \midrule
Base  & 0.942 (0.009)          & 0.943 (0.010)          & 0.949 (0.006)          & 0.925 (0.018)          & 0.919 (0.015)          & 0.925 (0.016)          & 0.916 (0.012)          & 0.913 (0.011)          & 0.914 (0.009)          \\ 
No XS & 0.994 (0.004)          & 0.426 (0.287)          & 0.902 (0.037)          & 0.974 (0.020)          & 0.711 (0.175)          & 0.919 (0.029)          & 0.986 (0.003)          & 0.717 (0.018)          & 0.889 (0.009)          \\
RRR   & 0.994 (0.004)          & 0.429 (0.291)          & 0.900 (0.041)          & 0.922 (0.039)          & 0.891 (0.034)          & 0.914 (0.037)          & \textbf{0.936 (0.021)} & \textbf{0.928 (0.024)} & \textbf{0.932 (0.019)} \\
RBR   & 0.952 (0.009)          & 0.922 (0.017)          & 0.931 (0.018)          & \textbf{0.927 (0.022)} & \textbf{0.923 (0.025)} & \textbf{0.923 (0.023)} & 0.974 (0.006)          & 0.825 (0.007)          & 0.903 (0.009)          \\
TAP   & \textbf{0.941 (0.007)} & \textbf{0.945 (0.008)} & \textbf{0.950 (0.006)} & \textbf{0.926 (0.016)} & \textbf{0.923 (0.018)} & \textbf{0.918 (0.017)} & 0.930 (0.013)          & 0.922 (0.013)          & 0.928 (0.014) \\ \bottomrule
\end{tabular}}
\caption{Final-epoch F-score and standard deviation over five runs on PNEU. Higher is better. Best permuted performance amongst XS methods in bold.}
\end{table*}

\begin{table*}[ht]
\centering
\resizebox{\textwidth}{!}{
\begin{tabular}{@{}lccccccccc@{}}
\toprule
       & \multicolumn{3}{c}{DenseNet-121}  & \multicolumn{3}{c}{ResNet-18}     & \multicolumn{3}{c}{VGG-16}        \\ \cmidrule(lr){2-4}\cmidrule(lr){5-7}\cmidrule(lr){8-10} 
       Case & Contaminated  & Permuted      & Clean         & Contaminated  & Permuted      & Clean         & Contaminated  & Permuted      & Clean         \\ \midrule
Base  & 0.610 (0.026)          & 0.600 (0.019)          & 0.600 (0.011)          & 0.596 (0.018)          & 0.602 (0.015)          & 0.597 (0.018)          & 0.490 (0.015)          & 0.490 (0.017)          & 0.486 (0.015)          \\ 
No XS & 0.000 (0.000)          & 1.650 (0.000)          & 1.059 (0.078)          & 0.002 (0.001)          & 1.641 (0.018)          & 1.580 (0.809)          & 0.000 (0.001)          & 1.587 (0.017)          & 1.763 (0.085)          \\
RRR   & 1.179 (0.075)          & 1.233 (0.015)          & 1.241 (0.008)          & 0.819 (0.071)          & 1.224 (0.010)          & 1.238 (0.010)          & 0.275 (0.041)          & 1.167 (0.005)          & 1.243 (0.001)          \\
RBR   & 0.763 (0.355)          & 1.322 (0.131)          & 1.409 (0.249)          & 0.258 (0.188)          & 1.304 (0.028)          & 1.203 (0.033)          & 0.410 (0.556)          & 1.422 (0.204)          & 1.547 (0.326)          \\
TAP   & \textbf{0.597 (0.013)} & \textbf{0.623 (0.009)} & \textbf{0.619 (0.012)} & \textbf{0.595 (0.015)} & \textbf{0.607 (0.020)} & \textbf{0.613 (0.016)} & \textbf{0.399 (0.138)} & \textbf{0.587 (0.062)} & \textbf{0.585 (0.045)}  \\ \bottomrule
\end{tabular}}
\caption{Final-epoch mean absolute error and standard deviation over five runs on KNEE. Lower is better. Best permuted performance amongst XS methods in bold.}
\end{table*}

\clearpage
\section{Additional Experiments with Deep Architectures}
\label{appx:additional_cnns}

In this section, we provide further evidence for TAP's efficacy in Figure~\ref{fig:rebut} on two deep CNNs, EfficientNet-B0 and MobileNet-v2. Both are notably less vulnerable to spurious signals on PNEU, as the No XS models show little contamination sensitivity. Given limited time, we focused on the deeper EfficientNet-B0 and the KNEE dataset. We trained with TAP using a $\lambda$ of $10^{-6}$ on PNEU and $10^{-5}$ on KNEE. In addition, we used a $\lambda$ of $10^{-6}$ for RRR and $10^{3}$ for RBR on KNEE. 

RRR does not succeed in protecting the model from spurious signals. RBR matches Base model performance but incompletely reduces contamination sensitivity. TAP, however, succeeds in (i) matching Base model performance and (ii) eliminating contamination sensitivity. 

We note that RRR and RBR are not expected to \emph{non-spuriously} perform better than Base models, as Base models are trained on clean data. We tried to provide results for RRR and RBR for the other cases but ran out of time due to significantly slower training times for RRR and RBR. Tuning the 237-layer EfficientNet-B0 on KNEE took 50 GPU hours for RRR, 140 GPU hours for RBR and only 4 GPU hours for TAP. This difference in speed is consistent in order of magnitude with our findings for other deep CNNs (Table~\ref{tab:train_speed}). Nonetheless, these additional results show that TAP \emph{non-spuriously} matches clean Base model performance, which validates that TAP is effective for a wide range of CNNs. 

\begin{figure*}[ht]
\centering
\begin{subfigure}{0.45\columnwidth}
\includegraphics[width=0.99\columnwidth]{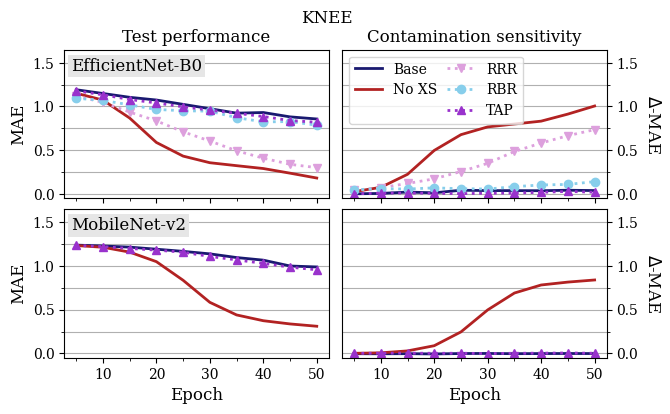}
\centering
% \caption{PNEU}
\end{subfigure}%
\begin{subfigure}{0.45\columnwidth}
\centering
\includegraphics[width=0.99\columnwidth]{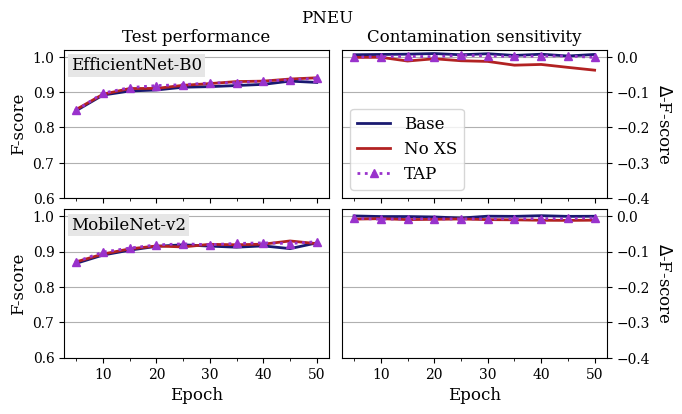}     
% \caption{KNEE}
\end{subfigure}
\caption{Models trained with TAP on contaminated data match performance of models trained on clean data (Base) and show no contamination sensitivity. Results over five runs. Lower is better for KNEE, higher is better for PNEU.}
\label{fig:rebut}
\end{figure*}

\clearpage
\section{Teacher Annotations on MNIST}
\label{appx:teacher_mnist}

\begin{figure*}[h!]
\centering
\includegraphics[width=0.45\textwidth]{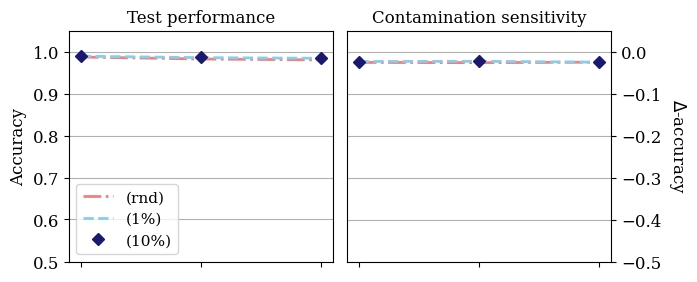} % Reduce the figure size so that it is slightly narrower than the column.
\caption{Test performance and contamination sensitivity on MNIST for TAP with random (rnd) and teacher (1\% or 10\%) annotations vs ground-truth. Higher is better. Performance and contamination sensitivity on MNIST do not appear to depend on the quality of annotations.}
\label{app:fig:mnist_ts}
\end{figure*}

\section{Tabular Results: Teacher Annotations}
\label{appx:teacher_table}

\begin{table*}[h]
\resizebox{\textwidth
}{!}{
\begin{tabular}{cccccccccc}
\toprule
     & \multicolumn{3}{c}{Contaminated}              & \multicolumn{3}{c}{Permuted}                  & \multicolumn{3}{c}{Clean}                     \\\cmidrule(lr){2-4}\cmidrule(lr){5-7}\cmidrule(lr){8-10} 
$\tau$  & 10\% teacher  & 1\% teacher   & Random        & 10\% teacher  & 1\% teacher   & Random        & 10\% teacher  & 1\% teacher   & Random        \\ \midrule
0.01 & 0.990 (0.001) & 0.989 (0.001) & 0.987 (0.001) & 0.966 (0.007) & 0.965 (0.006) & 0.961 (0.006) & 0.961 (0.009) & 0.961 (0.009) & 0.956 (0.008) \\
0.05 & 0.986 (0.001) & 0.985 (0.001) & 0.982 (0.000) & 0.963 (0.005) & 0.962 (0.006) & 0.956 (0.011) & 0.961 (0.008) & 0.961 (0.008) & 0.954 (0.012) \\
0.1  & 0.984 (0.001) & 0.984 (0.001) & 0.980 (0.001) & 0.960 (0.007) & 0.958 (0.009) & 0.955 (0.008) & 0.958 (0.009) & 0.957 (0.010) & 0.953 (0.010) \\ \bottomrule
\end{tabular}}
\caption{Final-epoch accuracy and standard deviation over five runs on MNIST using TAP with teacher annotations. Higher is better.}
\end{table*}

\begin{table}[h]
\resizebox{\textwidth}{!}{
\begin{tabular}{lcccccccccc}
\toprule
                              &      & \multicolumn{3}{c}{Contaminated}              & \multicolumn{3}{c}{Permuted}                  & \multicolumn{3}{c}{Clean}                     \\ \cmidrule(lr){3-5}\cmidrule(lr){6-8}\cmidrule(lr){9-11} 
                              Architecture & $\tau$  & 10\% teacher  & 1\% teacher   & Random        & 10\% teacher  & 1\% teacher   & Random        & 10\% teacher  & 1\% teacher   & Random        \\ \midrule
\multirow{3}{*}{Dense-121} & 0.01 & 0.952 (0.009) & 0.952 (0.008) & 0.994 (0.007) & 0.934 (0.011) & 0.935 (0.013) & 0.302 (0.349) & 0.949 (0.010) & 0.949 (0.009) & 0.890 (0.040) \\
                              & 0.05 & 0.941 (0.012) & 0.943 (0.011) & 0.996 (0.003) & 0.920 (0.019) & 0.918 (0.019) & 0.199 (0.293) & 0.934 (0.014) & 0.933 (0.014) & 0.886 (0.043) \\
                              & 0.1  & 0.959 (0.018) & 0.974 (0.010) & 0.994 (0.004) & 0.883 (0.022) & 0.858 (0.052) & 0.242 (0.299) & 0.925 (0.024) & 0.923 (0.027) & 0.892 (0.046) \\ \cmidrule(lr){1-11}
\multirow{3}{*}{ResNet-18}    & 0.01 & 0.922 (0.017) & 0.918 (0.016) & 0.963 (0.021) & 0.905 (0.018) & 0.898 (0.014) & 0.666 (0.305) & 0.918 (0.017) & 0.913 (0.015) & 0.884 (0.055) \\
                              & 0.05 & 0.910 (0.019) & 0.916 (0.017) & 0.948 (0.027) & 0.887 (0.023) & 0.877 (0.023) & 0.760 (0.149) & 0.897 (0.021) & 0.893 (0.022) & 0.872 (0.041) \\
                              & 0.1  & 0.924 (0.025) & 0.929 (0.019) & 0.945 (0.024) & 0.861 (0.028) & 0.854 (0.047) & 0.789 (0.112) & 0.886 (0.031) & 0.883 (0.031) & 0.872 (0.045) \\ \cmidrule(lr){1-11}
\multirow{3}{*}{VGG-16}       & 0.01 & 0.930 (0.006) & 0.930 (0.007) & 0.940 (0.008) & 0.906 (0.007) & 0.904 (0.008) & 0.875 (0.009) & 0.917 (0.006) & 0.916 (0.006) & 0.913 (0.006) \\
                              & 0.05 & 0.934 (0.007) & 0.935 (0.006) & 0.940 (0.007) & 0.880 (0.006) & 0.883 (0.010) & 0.872 (0.010) & 0.907 (0.004) & 0.909 (0.005) & 0.907 (0.005) \\
                              & 0.1  & 0.937 (0.008) & 0.937 (0.007) & 0.940 (0.007) & 0.872 (0.008) & 0.874 (0.011) & 0.868 (0.008) & 0.905 (0.005) & 0.907 (0.005) & 0.902 (0.005) \\ \bottomrule
\end{tabular}}
\caption{Final-epoch F-score and standard deviation over five runs on PNEU using TAP with teacher annotations. Higher is better.}
\end{table}

\begin{table}[h!]
\resizebox{\textwidth}{!}{
\begin{tabular}{lcccccccccc}
\toprule
                              &      & \multicolumn{3}{c}{Contaminated}              & \multicolumn{3}{c}{Permuted}                  & \multicolumn{3}{c}{Clean}                     \\ \cmidrule(lr){3-5}\cmidrule(lr){6-8}\cmidrule(lr){9-11} 
                              Architecture & $\tau$  & 10\% teacher  & 1\% teacher   & Random        & 10\% teacher  & 1\% teacher   & Random        & 10\% teacher  & 1\% teacher   & Random        \\ \midrule
\multirow{3}{*}{Dense-121} & 0.01 & 0.597 (0.075)                    & 0.575 (0.074)                   & 0.027 (0.040)              & 0.758 (0.043)                    & 0.776 (0.032)                   & 1.590 (0.099)              & 0.738 (0.028)                    & 0.746 (0.013)                   & 1.170 (0.204)              \\
                              & 0.05 & 0.618 (0.098)                    & 0.570 (0.049)                   & 0.099 (0.031)              & 0.917 (0.096)                    & 0.963 (0.038)                   & 1.377 (0.053)              & 0.866 (0.025)                    & 0.884 (0.031)                   & 1.273 (0.274)              \\
                              & 0.1  & 0.580 (0.110)                    & 0.563 (0.078)                   & 0.215 (0.131)              & 1.054 (0.098)                    & 1.058 (0.038)                   & 1.294 (0.049)              & 0.981 (0.136)                    & 0.976 (0.088)                   & 1.267 (0.268)              \\ \cmidrule(lr){1-11}
\multirow{3}{*}{ResNet-18}    & 0.01 & 1.000 (0.191)                    & 1.156 (0.196)                   & 0.869 (0.347)              & 1.009 (0.175)                    & 1.153 (0.194)                   & 1.148 (0.071)              & 1.023 (0.170)                    & 1.152 (0.193)                   & 1.112 (0.135)              \\
                              & 0.05 & 1.242 (0.015)                    & 1.249 (0.001)                   & 1.248 (0.002)              & 1.237 (0.012)                    & 1.245 (0.001)                   & 1.244 (0.001)              & 1.238 (0.012)                    & 1.245 (0.001)                   & 1.244 (0.002)              \\
                              & 0.1  & 1.249 (0.000)                    & 1.249 (0.000)                   & 1.249 (0.002)              & 1.245 (0.000)                    & 1.245 (0.000)                   & 1.245 (0.000)              & 1.245 (0.000)                    & 1.245 (0.000)                   & 1.245 (0.001)              \\ \cmidrule(lr){1-11}
\multirow{3}{*}{VGG-16}       & 0.01 & 0.198 (0.058)                    & 0.206 (0.053)                   & 0.066 (0.005)              & 0.889 (0.063)                    & 0.890 (0.071)                   & 1.212 (0.024)              & 0.860 (0.040)                    & 0.866 (0.056)                   & 1.168 (0.064)              \\
                              & 0.05 & 0.142 (0.027)                    & 0.136 (0.029)                   & 0.073 (0.008)              & 1.079 (0.029)                    & 1.091 (0.042)                   & 1.270 (0.024)              & 1.049 (0.084)                    & 1.070 (0.100)                   & 1.151 (0.098)              \\
                              & 0.1  & 0.106 (0.020)                    & 0.097 (0.018)                   & 0.077 (0.006)              & 1.154 (0.044)                    & 1.165 (0.041)                   & 1.283 (0.013)              & 1.079 (0.144)                    & 1.077 (0.149)                   & 1.305 (0.133)  \\ \bottomrule
\end{tabular}}
\caption{Final-epoch MAE and standard deviation over five runs on KNEE using TAP with teacher annotations. Lower is better.}
\end{table}

\clearpage
\section{Teacher Performance}
\label{appx:teacher_performance}

\begin{table*}[ht]
\centering
\begin{tabular}{lcccc}
\toprule
                Data (metric)       &  Teacher            & Contaminated  & Permuted      & Clean         \\ \midrule
\multirow{2}{*}{MNIST (accuracy)} & 10\% & 0.966 (0.002) & 0.965 (0.001) & 0.967 (0.001) \\ 
                       & 1\%  & 0.891 (0.010) & 0.888 (0.009) & 0.890 (0.010) \\ \cmidrule(lr){1-5}
\multirow{2}{*}{PNEU (F-score)}  & 10\% & 0.873 (0.014) & 0.869 (0.019) & 0.875 (0.014) \\ 
                       & 1\%  & 0.677 (0.159) & 0.668 (0.165) & 0.697 (0.130) \\ \cmidrule(lr){1-5}
\multirow{2}{*}{KNEE (MAE)}  & 10\% & 1.024 (0.053) & 1.026 (0.039) & 1.025 (0.032) \\
                       & 1\%  & 1.223 (0.037) & 1.226 (0.043) & 1.227 (0.038) \\ \bottomrule
\end{tabular}
\caption{Teacher performance of teachers trained on a 1\% or 10\% clean split and standard deviation over five runs. Higher is better for MNIST and PNEU, lower is better for KNEE.}
\end{table*}

\clearpage
\section{RRR and RBR with Teacher Annotations}
\label{appx:rrr_rbr_teacher}

For completeness, we report in this section additional results of using RRR and RBR with teacher annotations on PNEU and KNEE---we re-state results for TAP for ease of comparison. In general, the results suggest that RRR and RBR do not work as well with noisy teacher annotations as TAP, either because they fail to protect the model from relying on spurious signals or because of numerical instabilities during training. 

Table~\ref{app:table:pneu_ts_extra} shows for PNEU that RRR and RBR are not able to prevent models from learning spurious signals given the large drop in performance between contaminated and permuted performance. Table~\ref{app:table:knee_ts_extra} further support this finding on KNEE. The results also indicate that RRR and RBR may be more vulnerable to poor annotation quality, as several models trained with RRR and RBR and random annotations did not converge, especially at the higher threshold. In those cases, we report averages and standard deviations of the converged runs if at least three runs converged. 

\begin{table*}[h]
\resizebox{\textwidth}{!}{
\begin{tabular}{clccccccccc}
\toprule
                      &            & \multicolumn{3}{c}{DenseNet-121}                                         & \multicolumn{3}{c}{ResNet-18}                                            & \multicolumn{3}{c}{VGG-16}                                               \\ \cmidrule(lr){3-5} \cmidrule(lr){6-8} \cmidrule(lr){9-11}
$\tau$                & Case       & Contaminated           & Permuted               & Clean                  & Contaminated           & Permuted               & Clean                  & Contaminated           & Permuted               & Clean                  \\ \midrule
\multirow{6}{*}{0.01} & TAP (10\%) & \textbf{0.952 (0.009)} & \textbf{0.934 (0.011)} & \textbf{0.949 (0.010)} & \textbf{0.922 (0.017)} & \textbf{0.905 (0.018)} & \textbf{0.918 (0.017)} & \textbf{0.930 (0.006)} & \textbf{0.906 (0.007)} & \textbf{0.917 (0.006)} \\
                      & TAP (rnd)  & 0.994 (0.007)          & 0.302 (0.349)          & 0.890 (0.040)          & 0.963 (0.021)          & 0.666 (0.305)          & 0.884 (0.055)          & 0.940 (0.008)          & 0.875 (0.009)          & 0.913 (0.006)          \\
                      & RRR (10\%) & 0.994 (0.007)          & 0.348 (0.371)          & 0.904 (0.034)          & 0.955 (0.014)          & 0.805 (0.051)          & 0.888 (0.043)          & 0.492 (0.421)          & 0.496 (0.406)          & 0.496 (0.411)          \\
                      & RRR (rnd)  & 0.993 (0.005)          & 0.418 (0.305)          & 0.904 (0.027)          & 0.975 (0.017)          & 0.545 (0.126)          & 0.853 (0.052)          & 0.654 (0.300)          & 0.283 (0.303)          & 0.416 (0.379)          \\
                      & RBR (10\%) & 0.951 (0.031)          & 0.815 (0.132)          & 0.911 (0.021)          & 0.968 (0.020)          & 0.793 (0.097)          & 0.915 (0.038)          & 0.987 (0.003)          & 0.719 (0.021)          & 0.893 (0.009)          \\
                      & RBR (rnd)  & 0.973 (0.023)$\ddagger$          & 0.714 (0.233)$\ddagger$          & 0.904 (0.028)$\ddagger$          & 0.972 (0.011)          & 0.787 (0.054)          & 0.908 (0.044)          & 0.987 (0.003)          & 0.719 (0.021)          & 0.893 (0.009)          \\ \cmidrule(lr){1-11}
\multirow{6}{*}{0.1} & TAP (10\%) & \textbf{0.959 (0.018)} & \textbf{0.883 (0.022)} & \textbf{0.925 (0.024)} & \textbf{0.924 (0.025)} & \textbf{0.861 (0.028)} & \textbf{0.886 (0.031)} & \textbf{0.937 (0.008)} & \textbf{0.872 (0.008)} & \textbf{0.905 (0.005)} \\
                     & TAP (rnd)  & 0.994 (0.004)          & 0.242 (0.299)          & 0.892 (0.046)          & 0.945 (0.024)          & 0.789 (0.112)          & 0.872 (0.045)          & 0.940 (0.007)          & 0.868 (0.008)          & 0.902 (0.005)          \\
                     & RRR (10\%) & 0.993 (0.006)          & 0.378 (0.356)          & 0.909 (0.035)          & 0.980 (0.005)          & 0.738 (0.150)          & 0.911 (0.060)          & 0.953 (0.036)          & 0.392 (0.333)          & 0.823 (0.079)          \\
                     & RRR (rnd)  & 0.992 (0.007)          & 0.414 (0.336)          & 0.906 (0.033)          & 0.977 (0.008)          & 0.728 (0.156)          & 0.903 (0.059)          & 0.997 (0.003)          & 0.067 (0.102)          & 0.723 (0.153)          \\
                     & RBR (10\%) & \multicolumn{3}{c}{DNC}          & 0.969 (0.012)          & 0.804 (0.043)          & 0.908 (0.041)          & 0.987 (0.003)          & 0.718 (0.022)          & 0.892 (0.009)          \\
                     & RBR (rnd)  & \multicolumn{3}{c}{DNC}          & 0.961 (0.017)          & 0.808 (0.047)          & 0.902 (0.034)          & 0.987 (0.003)          & 0.717 (0.021)          & 0.892 (0.009)      \\ \bottomrule   
\end{tabular}}
\caption{Final-epoch F-score and standard deviation over five runs on PNEU using XS with teacher (10\%) and random (rnd) annotations. Cases that did not converge (DNC) at least three times are marked as such. Cases with $\ddagger$ twice DNC. Lower is better. Best permuted performance per architecture and threshold in bold.}
\label{app:table:pneu_ts_extra}
\end{table*}

\begin{table*}[h]
\resizebox{\textwidth}{!}{
\begin{tabular}{clccccccccc}
\toprule
                      &            & \multicolumn{3}{c}{DenseNet-121}                                         & \multicolumn{3}{c}{ResNet-18}                                            & \multicolumn{3}{c}{VGG-16}                                               \\ \cmidrule(lr){3-5} \cmidrule(lr){6-8} \cmidrule(lr){9-11}
$\tau$                & Case       & Contaminated           & Permuted               & Clean                  & Contaminated           & Permuted               & Clean                  & Contaminated           & Permuted               & Clean                  \\ \midrule
\multirow{6}{*}{0.01}    & TAP (10\%) & \textbf{0.597 (0.075)} & \textbf{0.758 (0.043)} & \textbf{0.738 (0.028)} & \textbf{1.000 (0.191)} & \textbf{1.009 (0.175)} & \textbf{1.023 (0.170)} & \textbf{0.198 (0.058)} & \textbf{0.889 (0.063)} & \textbf{0.860 (0.040)} \\
                     & TAP (rnd)  & 0.027 (0.040)          & 1.590 (0.099)          & 1.170 (0.204)          & 0.869 (0.347)          & 1.148 (0.071)          & 1.112 (0.135)          & 0.066 (0.005)          & 1.212 (0.024)          & 1.168 (0.064)          \\
                     & RRR (10\%) & 1.142 (0.149)          & 1.232 (0.021)          & 1.254 (0.004)          & 1.202 (0.025)          & 1.238 (0.004)          & 1.259 (0.002)          & 0.093 (0.006)          & 1.375 (0.020)          & 1.259 (0.001)          \\
                     & RRR (rnd)  & 1.164 (0.107)$\ddagger$          & 1.244 (0.017)$\ddagger$          & 1.261 (0.009)$\ddagger$          & 1.156 (0.174)$\dagger$          & 1.246 (0.002)$\dagger$          & 1.259 (0.001)$\dagger$          & 0.085 (0.003)          & 1.429 (0.014)          & 1.253 (0.006)          \\
                     & RBR (10\%) & 0.494 (0.428)          & 1.568 (0.246)          & 1.254 (0.002)          & 0.047 (0.037)          & 1.512 (0.132)          & 1.197 (0.110)          & 0.889 (0.197)$\dagger$          & 1.203 (0.035)$\dagger$          & 1.259 (0.000)$\dagger$          \\
                     & RBR (rnd)  & 0.011 (0.003)$\dagger$          & 1.650 (0.002)$\dagger$          & 1.174 (0.109)$\dagger$          & 0.950 (0.311)          & 1.176 (0.138)          & 1.195 (0.183)          & \multicolumn{3}{c}{DNC}          \\ \cmidrule(lr){1-11}
\multirow{6}{*}{} & TAP (10\%) & \textbf{0.580 (0.110)} & \textbf{1.054 (0.098)} & \textbf{0.981 (0.136)} & 1.249 (0.000)          & 1.245 (0.000)          & 1.245 (0.000)          & \textbf{0.106 (0.020)} & \textbf{1.154 (0.044)} & \textbf{1.079 (0.144)} \\
                  & TAP (rnd)  & 0.215 (0.131)          & 1.294 (0.049)          & 1.267 (0.268)          & 1.249 (0.002)          & 1.245 (0.000)          & 1.245 (0.001)          & 0.077 (0.006)          & 1.283 (0.013)          & 1.305 (0.133)          \\
                  & RRR (10\%) & 0.768 (0.229)          & 1.251 (0.034)          & 1.249 (0.010)          & \textbf{0.723 (0.078)} & \textbf{1.230 (0.015)} & \textbf{1.251 (0.005)} & 0.073 (0.012)          & 1.458 (0.024)          & 1.257 (0.001)          \\
                  & RRR (rnd)  & 1.340 (0.681)$\ddagger$          & 1.592 (0.590)$\ddagger$          & 1.655 (0.688)$\ddagger$          & 0.650 (0.108)          & 1.259 (0.021)          & 1.250 (0.009)          & 0.034 (0.010)          & 1.557 (0.027)          & 1.586 (0.212)          \\
                  & RBR (10\%) & 0.578 (0.830)$\dagger$          & 1.585 (0.256)$\dagger$          & 1.409 (0.434)$\dagger$          & \multicolumn{3}{c}{DNC}                                                  & \multicolumn{3}{c}{DNC}                                                  \\
                  & RBR (rnd)  & \multicolumn{3}{c}{DNC}                                                  & \multicolumn{3}{c}{DNC}                                                  & \multicolumn{3}{c}{DNC}        \\ \bottomrule   
\end{tabular}}
\caption{Final-epoch MAE and standard deviation over five runs on KNEE using XS with teacher (10\%) and random (rnd) annotations. Cases that did not converge (DNC) at least three times are marked as such. Cases with $\dagger$ once DNC, cases with $\ddagger$ twice DNC. Lower is better. Best permuted performance per architecture and threshold in bold.}
\label{app:table:knee_ts_extra}
\end{table*}

\clearpage
\section{Comparison with Knowledge Distillation Methods}
\label{appx:kd_comparison}

In this section we provide a comparison of TAP with knowledge distillation methods. We compare our results from TAP with teacher annotations ($\tau = 0.01$) with attention transfer (AT) \citep{zagoruykoPayingMoreAttention2017}, Jacobian transfer (JT) \citep{srinivasKnowledgeTransferJacobian2018} and vanilla knowledge distillation (KD) \citep{hintonDistillingKnowledgeNeural2015}. Note that while these methods all feature teacher-student setups, our method differs in that the teacher model does not directly provide targets for the student to match but is instead used to identify areas to be ignored by the student. AT transfers the activations of the teacher model to the student. While our method has a similar focus on student activations, teachers do not transfer activations but instead provide input-level explanations. JT, on the other hand, does focus on input-level explanations in the form of Jacobian but again makes the student match them. Moreover, as JT requires calculating second-order derivatives, training times and memory usage suffer. KD, lastly, focuses only on logits and therefore leaves the ``reasons'' of the student unconstrained.  

We use the same ResNet-18 teachers from our main experiments and tune $\lambda$ as before but expand the range to $10^{-9}$ to $10^{4}$ for AT, JT and KD---we report chosen values in Table~\ref{app:table:kd_paras}. 

On MNIST (Table~\ref{app:table:kd_mnist}), both AT and JT are not successful at protecting the model from learning spurious correlations. Surprisingly, KD is able to yield a good performance closely matching that of TAP for the 10\%-teacher. The performance of KD reduces slightly, however, when we switch to the less well-trained 1\%-teacher. TAP, on the other hand, consistently performs well with both the less and better trained teachers. On PNEU (Table~\ref{app:table:kd_pneu}), JT and KD do not mitigate contamination sensitivity well, given the difference in contaminated and permuted performances. AT, in comparison, performs better and with ResNet-18 and VGG-16 appears to perform similarly to TAP. AT, however, falls behind TAP with DenseNet-121. On KNEE (Table~\ref{app:table:kd_knee}), JT and KD again do not perform well. With ResNet-18, AT yields the best results on the permuted data but is characterised by contamination sensitivity which is absent under TAP. TAP, otherwise, produces the best results on the permuted data and is the most successful of the methods in mitigating contamination sensitivity. 

Overall, JT and KD do not seem to be very successful at transferring contamination insensitivity from the teacher to the student in the context of deeper CNNs. While AT is more successful and in some cases performs on par with TAP, its success varies more with the architecture. In addition, AT requires that the number of activations is aligned between teacher and student which can be challenging when the models vary significantly in depth. TAP, on the other hand, makes use of input-level annotations which can be downsampled for any number of activations.

% \clearpage

\begin{table*}[h]
\centering
\begin{tabular}{clccc}
\toprule
Teacher               & Loss & Contaminated  & Permuted      & Clean         \\ \midrule
\multirow{4}{*}{10\%} & TAP & 0.990 (0.001)          & 0.966 (0.007)          & 0.961 (0.009)          \\
                      & AT  & 0.995 (0.000)          & 0.851 (0.010)          & 0.689 (0.035)          \\
                      & JT  & 0.996 (0.000)          & 0.820 (0.007)          & 0.640 (0.025)          \\
                      & KD  & \textbf{0.984 (0.000)} & \textbf{0.968 (0.000)} & \textbf{0.970 (0.003)} \\ \cmidrule(lr){1-5}
\multirow{4}{*}{1\%}  & TAP & \textbf{0.989 (0.001)} & \textbf{0.965 (0.006)} & \textbf{0.961 (0.009)} \\
                      & AT  & 0.994 (0.000)          & 0.822 (0.015)          & 0.642 (0.033)          \\
                      & JT  & 0.996 (0.000)          & 0.820 (0.007)          & 0.640 (0.025)          \\
                      & KD  & 0.966 (0.002)          & 0.929 (0.006)          & 0.922 (0.009) \\ \bottomrule
\end{tabular}
\caption{Final-epoch accuracy and standard deviation over five runs on MNIST. Higher is better. Best permuted performance in bold.}
\label{app:table:kd_mnist}
\end{table*}

\begin{table*}[h]
\centering
\resizebox{\textwidth}{!}{
\begin{tabular}{clccccccccc}
\toprule
\multicolumn{1}{l}{}  & \multicolumn{1}{l}{} & \multicolumn{3}{c}{DenseNet-121}              & \multicolumn{3}{c}{ResNet-18}                 & \multicolumn{3}{c}{VGG-16}                            \\ \cmidrule(lr){3-5}\cmidrule(lr){6-8}\cmidrule(lr){9-11}
Teacher               & Loss                 & Contaminated  & Permuted      & Clean         & Contaminated  & Permuted      & Clean         & Contaminated  & Permuted      & Clean                 \\ \midrule
\multirow{4}{*}{10\%} & TAP & \textbf{0.952 (0.009)} & \textbf{0.934 (0.011)} & \textbf{0.949 (0.010)} & \textbf{0.922 (0.017)} & \textbf{0.905 (0.018)} & \textbf{0.918 (0.017)} & 0.930 (0.006)          & 0.906 (0.007)          & 0.917 (0.006)          \\
                      & AT  & 0.970 (0.006)          & 0.892 (0.024)          & 0.948 (0.006)          & 0.922 (0.022)          & 0.903 (0.020)          & 0.918 (0.020)          & \textbf{0.937 (0.020)} & \textbf{0.909 (0.018)} & \textbf{0.926 (0.020)} \\
                      & JT  & 0.992 (0.006)          & 0.421 (0.303)          & 0.901 (0.021)          & 0.980 (0.009)          & 0.734 (0.123)          & 0.923 (0.043)          & 0.987 (0.003)          & 0.719 (0.021)          & 0.893 (0.009)          \\
                      & KD  & 0.898 (0.012)          & 0.892 (0.014)          & 0.899 (0.011)          & 0.985 (0.003)          & 0.718 (0.112)          & 0.924 (0.043)          & 0.915 (0.016)          & 0.898 (0.011)          & 0.915 (0.012)          \\ \cmidrule(lr){1-11}
\multirow{4}{*}{1\%}  & TAP & \textbf{0.952 (0.008)} & \textbf{0.935 (0.013)} & \textbf{0.949 (0.009)} & 0.918 (0.016)          & 0.898 (0.014)          & 0.913 (0.015)          & \textbf{0.930 (0.007)} & \textbf{0.904 (0.008)} & \textbf{0.916 (0.006)} \\
                      & AT  & 0.972 (0.007)          & 0.902 (0.008)          & 0.947 (0.008)          & \textbf{0.923 (0.015)} & \textbf{0.904 (0.018)} & \textbf{0.919 (0.014)} & 0.920 (0.036)          & 0.895 (0.041)          & 0.908 (0.038)          \\
                      & JT  & 0.994 (0.005)          & 0.375 (0.343)          & 0.907 (0.031)          & 0.981 (0.008)          & 0.731 (0.119)          & 0.925 (0.043)          & 0.987 (0.003)          & 0.719 (0.021)          & 0.893 (0.009)          \\
                      & KD  & 0.706 (0.173)          & 0.695 (0.175)          & 0.726 (0.149)          & 0.985 (0.003)          & 0.716 (0.112)          & 0.922 (0.042)          & 0.755 (0.127)          & 0.725 (0.142)          & 0.732 (0.149) \\ \bottomrule
\end{tabular}}
\caption{Final-epoch F-score and standard deviation over five runs on PNEU. Higher is better. Best permuted performance in bold.}
\label{app:table:kd_pneu}
\end{table*}

\clearpage

\begin{table*}[h]
\centering
\resizebox{\textwidth}{!}{
\begin{tabular}{clccccccccc}
\toprule
\multicolumn{1}{l}{}  & \multicolumn{1}{l}{} & \multicolumn{3}{c}{DenseNet-121}              & \multicolumn{3}{c}{ResNet-18}                 & \multicolumn{3}{c}{VGG-16}                            \\ \cmidrule(lr){3-5}\cmidrule(lr){6-8}\cmidrule(lr){9-11}
Teacher               & Loss                 & Contaminated  & Permuted      & Clean         & Contaminated  & Permuted      & Clean         & Contaminated  & Permuted      & Clean                 \\ \midrule
\multirow{4}{*}{10\%} & TAP & \textbf{0.627 (0.041)} & \textbf{0.745 (0.037)} & \textbf{0.733 (0.030)} & 1.000 (0.191)          & 1.009 (0.175)          & 1.023 (0.170)          & \textbf{0.198 (0.058)} & \textbf{0.889 (0.063)} & \textbf{0.860 (0.040)} \\
                      & AT  & 0.455 (0.079)          & 1.005 (0.035)          & 0.930 (0.028)          & \textbf{0.528 (0.055)} & \textbf{0.882 (0.021)} & \textbf{0.859 (0.031)} & 0.116 (0.046)          & 1.148 (0.050)          & 0.949 (0.035)          \\
                      & JT  & 1.779 (0.641)          & 1.780 (0.642)          & 1.781 (0.633)          & 1.963 (0.676)          & 1.965 (0.680)          & 1.963 (0.668)          & 2.150 (0.822)          & 2.151 (0.827)          & 2.148 (0.812)          \\
                      & KD  & 0.996 (0.093)          & 1.038 (0.080)          & 1.010 (0.056)          & 0.814 (0.079)          & 1.039 (0.048)          & 1.038 (0.039)          & 0.985 (0.078)          & 1.022 (0.064)          & 1.066 (0.066)          \\ \cmidrule(lr){1-11}
\multirow{4}{*}{1\%}  & TAP & \textbf{0.575 (0.074)} & \textbf{0.776 (0.032)} & \textbf{0.746 (0.013)} & 1.156 (0.196)          & 1.153 (0.194)          & 1.152 (0.193)          & \textbf{0.206 (0.053)} & \textbf{0.890 (0.071)} & \textbf{0.866 (0.056)} \\
                      & AT  & 0.466 (0.106)          & 1.037 (0.044)          & 0.959 (0.042)          & \textbf{0.548 (0.044)} & \textbf{0.927 (0.025)} & \textbf{0.924 (0.040)} & 0.126 (0.041)          & 1.177 (0.014)          & 0.954 (0.039)          \\
                      & JT  & 1.608 (0.643)          & 1.607 (0.647)          & 1.612 (0.636)          & 2.196 (0.742)          & 2.198 (0.746)          & 2.192 (0.734)          & 2.150 (0.822)          & 2.153 (0.826)          & 2.148 (0.812)          \\
                      & KD  & 1.230 (0.021)          & 1.223 (0.018)          & 1.235 (0.027)          & 1.065 (0.093)          & 1.220 (0.037)          & 1.230 (0.040)          & 1.228 (0.019)          & 1.233 (0.019)          & 1.249 (0.008) \\ \bottomrule
\end{tabular}}
\caption{Final-epoch MAE and standard deviation over five runs on KNEE. Lower is better. Best permuted performance in bold.}
\label{app:table:kd_knee}
\end{table*}

\begin{table*}[h]
\centering
\begin{tabular}{lcccccccccccc}
\toprule
       & \multicolumn{3}{c}{DenseNet-121}  & \multicolumn{3}{c}{ResNet-18}     & \multicolumn{3}{c}{VGG-16}   & \multicolumn{3}{c}{Two-layer CNN}         \\ \cmidrule(lr){2-4}\cmidrule(lr){5-7}\cmidrule(lr){8-10}\cmidrule(lr){11-13} 
     Data & AT       & JT       & KD      & AT       & JT       & KD     & AT       & JT       & KD      & AT       & JT       & KD \\ \midrule
PNEU  & $10^{2}$ & $10^{-9}$ & $10^{-4}$ & $10^{0}$ & $10^{-3}$ & $10^{4}$ & $10^{3}$ & $10^{-6}$ & $10^{2}$ & -     & -     & -     \\
KNEE  & $10^{3}$ & $10^{1}$ & $10^{1}$ & $10^{3}$ & $10^{1}$ & $10^{3}$ & $10^{3}$ & $10^{1}$ & $10^{2}$ & -     & -     & -     \\
MNIST & -     & -     & -     & -     & -     & -     & -     & -     & -     & $10^{0}$ & $10^{-7}$ & $10^{0}$ \\ \bottomrule
\end{tabular}
\caption{Choices for $\lambda$ for all knowledge distillation methods.}
\label{app:table:kd_paras}
\end{table*}

\end{document}